\newtheorem{theorem}{Theorem}[section]
\newtheorem{lemma}{Lemma}[section]
\newcommand{\A}{\mathcal{A}}
\newcommand{\G}{\mathcal{G}}
\newcommand{\tf}{\tilde{f}}
\newcommand{\F}{\mathcal{F}}
\newcommand{\tg}{\tilde{g}}
\newcommand{\tO}{\tilde{O}}
\newcommand{\blr}[1]{\big(#1\big)}
\newcommand{\eq}{~=~}
\renewcommand{\le}{~\leq~}
\renewcommand{\ge}{~\geq~}
\DeclareMathOperator*{\argmin}{arg\,min}
\def\reals{{\mathcal R}}
\newcommand{\K}{\mathcal{K}}
\newcommand{\R}{\mathcal{R}}
\newcommand{\ignore}[1]{}
\def\reals{{\mathbb R}}
\def\bold0{\mathbf{0}}
\newcommand\E{\mbox{\bf E}}
\title{Online to Offline Conversions, Universality and Adaptive Minibatch Sizes}%
\author{%
Kfir Y. Levy\footnote{Department of Computer Science, ETH Z\"urich. 
Email :\texttt{yehuda.levy@inf.ethz.ch}.}
}
\begin{document}
\maketitle

\begin{abstract} 
We present an approach towards convex optimization that relies on a  novel scheme  which converts   online adaptive algorithms into  offline methods.
 In the offline optimization setting, our derived methods are shown to obtain  favourable adaptive guarantees which depend on the \emph{harmonic sum} of the queried gradients.  We further show that our methods implicitly adapt to the objective's structure:  
in the smooth case   fast convergence rates are ensured without any prior knowledge of the smoothness parameter, while still maintaining guarantees in the non-smooth setting.
%This contrasts with line-search GD (gradient descent) procedures which do not hold guarantees in the non-smooth setting.
%require in to know advance whether the problem is smooth/non-smooth.
% Our derived method implicitly adapts to the objective's structure, attaining adaptive bounds which depend on the harmonic sum of gradient
% 
% 
% attaining  fast convergence rates without any prior knowledge of the smoothness.
%This contrasts with the GD (gradient descent) algorithm which requires the smoothness parameter in order to obtain such fast rates.
%These rates are comparable to the ones attained by the GD (gradient descent) algorithm had it known the smoothness parameter.
Our approach has a natural extension to the stochastic setting, resulting in a lazy version of SGD (stochastic GD), where minibathces are chosen \emph{adaptively}  depending on the magnitude of the gradients. Thus providing a principled approach towards choosing minibatch sizes.
\end{abstract}

\section{Introduction}

%When employing a first order  method, one is required to choose a learning rate rule which controls the scale of the steps taken  in every dimension, and
%it is known that a smart tuning of the learning rate is crucial to the success of the learning process.
%Thus, state-of-the-art implementations employ methods like AdaGrad \cite{duchi2011adaptive}, and Adam \cite{kingma2014adam}, which adapt the learning rate on the fly according to the feedback (i.e. gradients) received during the optimization process. 
%Despite the success of existing adaptive schemes, the assumptions underlining their theoretical basis do not capture well the task of Neural Network optimization. We intend to develop new adaptive schemes based on a more suitable set of assumptions. 

%Over the past years \emph{data adaptive} methods like Adagrad [] Adam[] Adadelta[], were proven to be extremely successful
%in tackling large scale problems.
%The objective function underlying 
% ``big data" applications often demonstrates intricate structure:
% the scale and smoothness are often unknown and may change substantially in between different regions/directions.
% %Furthermore, the scale/smoothness is often unknown.
% Learning methods that  acclimatize to these changes may exhibit a 
% superior performance compared to non adaptive procedures, which in turn might make the difference  between success and failure in practice. 
% 
% Though  Adagrad [] Adam[], are guaranteed to 

Over the past years
\emph{data adaptiveness} has proven to be  crucial to the success of learning algorithms. 
The objective function underlying 
 ``big data" applications often demonstrates intricate structure:
 the scale and smoothness are often unknown and may change substantially in between different regions/directions.
 %Furthermore, the scale/smoothness is often unknown.
 Learning methods that  acclimatize to these changes may exhibit  
 superior performance compared to non adaptive procedures, which in turn might make the difference  between success and failure in practice (see e.g. \cite{duchi2011adaptive}).

 State-of-the-art  first order methods like AdaGrad, \cite{duchi2011adaptive}, and Adam, \cite{kingma2014adam},  adapt the learning rate on the fly according to the feedback (i.e. gradients) received during the optimization process. 
 AdaGrad and Adam are guaranteed to work well in the \emph{online} convex optimization setting, where loss functions may be chosen \emph{adversarially}  and change between rounds.
Nevertheless, this setting is  harder than the stochastic/offline settings, which may better depict practical applications.  Interestingly, even in the offline convex optimization setting it could be shown that in several scenarios
very simple schemes  may substantially outperform the output of  AdaGrad/Adam.
An example of such a simple scheme is choosing the point with the smallest gradient norm among all rounds.
In the first part of this work we address this issue and design adaptive methods for the offline convex optimization setting.
At heart of our derivations is a novel  scheme which converts  online adaptive algorithms into offline methods with favourable guarantees\footnote{For concreteness we concentrate in this work on converting AdaGrad,~\cite{duchi2011adaptive}. Note that our conversion scheme applies more widely  to  other  online adaptive methods.}. 
Our shceme is inspired by  standard online to batch conversions as introduced in the seminal work of \citet*{cesa2004generalization}.

%Despite the success of existing adaptive schemes, the assumptions underlining their theoretical basis do not capture well the task of Neural Network optimization. We intend to develop new adaptive schemes based on a more suitable set of assumptions.
A seemingly different issue is choosing the minibatch size, $b$, in the stochastic setting.
Stochastic optimization algorithms that  can access a noisy gradient oracle may choose to invoke  the oracle $b$ times in every query point, subsequently employing an averaged gradient estimate. 
Theory for stochastic convex optimization   suggests to use a minibatch of $b=1$, and predicts a degradation  of 
$\sqrt{b}$ factor upon using larger minibatch sizes~\footnote{A degradation by a  $\sqrt{b}$ factor in the general case and by a  $b$  factor in the strongly-convex case.}.
%In the general stochastic convex setting theory suggests to use a minibatch of $b=1$, where lar. 
Nevertheless in practice larger minibatch sizes are usually found to be effective. %, despite the theoretical degradation.
In the second part of this work we design stochastic optimization methods in which minibatch sizes are chosen \emph{adaptively}
 without any theoretical degradation. These  are natural extensions of the offline methods presented in the first part. 
 
 Our contributions: 
 \paragraph{Offline setting:} We present two (families of) algorithms $\text{AdaNGD}$ (Alg.~\ref{algorithm:AdaNGD}) and $\text{SC-AdaNGD}$ (Alg.~\ref{algorithm:SC-AdaNGD})   for the convex/strongly-convex settings which achieve favourable adaptive guarantees (Thms.~\ref{thm:AdaNGDnonsmooth},~\ref{thm:AdaNGD2},~\ref{thm:SC-AdaNGDnonsmooth},~\ref{thm:SC-AdaNGD2} ). The latter theorems also establish their  universality, i.e., their ability to implicitly take advantage of the objective's  smoothness and attain  rates as fast as 
 GD would have achieved if the smoothness parameter was known.\\
 Concretely, without the knowledge of the smoothness parameter our algorithm ensures an $O(1/\sqrt{T})$  rate in  general convex case and an $O(1/T)$ rate if the objective is also smooth (Thms.~\ref{thm:AdaNGDnonsmooth},~\ref{thm:AdaNGD2}). In the strongly-convex case our algorithm ensures 
  an $O(1/T)$ rate in general and an  $O(\exp(-\gamma T))$ rate if the objective is also smooth (Thm.~\ref{thm:SC-AdaNGD2} ), where $\gamma$ is the condition number.
% To the best of our knowledge the latter result is  novel.
% \kl{say something about the same proof structure for all methods, something like:
% we get the full spectrum of rates for convex optimization $1/sqrt{T}, 1/T, exp(-\gamma T)$ using the same basic analysis tool, they are all based on the regret guarantees of AdaGrad for the online setting}
 %with the explicit knowledge of the smoothness.\\
%\vspace{5pt}

 \paragraph{Stochastic setting:} We present  Lazy-SGD (Algorithm~\ref{algorithm:SAdaNGD2_new}) which is an extension of our offline algorithms. Lazy-SGD employs larger minibatch sizes in points with smaller gradients, which selectively reduces the variance in the ``more important" query points. 
%Moreover, this algorithm modifies the learning rate according to the minibatch sizes. 
%\kl{think of rephrasing the variance reduction statement-see what happens with experiments}
 Lazy-SGD guarantees are comparable with SGD in the convex/strongly-convex settings (Thms.~\ref{lem:LazySGD_general_expectation},~\ref{lem:LazySGDstronglyConvex_expectation}).
%%In the realizable case of  square loss stochastic  optimization\footnote{Note we do not assume strong-convexity.}   we design a different method,
%%Fast Lazy-SGD (Alg.~\ref{algorithm:FastLazySGD}).   This first order algorithm achieves a convergence rate of
% $\tO(1/T^{2/3})$ within $T$ noisy queries (Thm.~\ref{thm:FLazySGD_general_expectation}), which improves upon the standard $O(1/\sqrt{T})$ rate of SGD. While second order methods such as Online Newton Step may achieve  rates as fast as  $O(d/T)$, their per query runtime  scales like $d^3$ with the dimension~\cite{Koren}.
% Conversely, the per query runtime of Fast Lazy-SGD is equivalent to that  of SGD.

On the technical side, our online to offline conversion schemes employ three simultaneous mechanisms: 
 an  online adaptive algorithm used in conjunction with 
gradient normalization and with a respective importance weighting.
To the best of our knowledge the combination of the above techniques is novel, and we believe it might also find use in other scenarios.
%On the technical side, our adaptive methods employ three simultaneous mechanisms: 
%gradient normalization in conjunction with a respective importance weighting, and learning rate adaptation.
%While the latter is a well known adaptive strategy,  we believe that the combination of the above techniques is novel and might find use in other scenarios.
%%Moreover, in the stochastic setting we encounter an interesting phenomenon: 
%%Lazy-SGD needs to adapt the learning rate depending on the minibatch sizes.
%%Conversely, in Fast Lazy-SGD the learning rate is independent of the minibacthes.

This paper is organized as follows. 
In Sections~\ref{sec:AdaNGD_general},\ref{sec:AdaNGD_StronglyConvex},  we present our methods for the offline convex/strongly-convex settings.  Section~\ref{sec:Stochastic} describes our methods  for the stochastic setting.
In Section~\ref{sec:Extensions} we discuss several extensions, and  Section~\ref{sec:experiments} presents a preliminary experimental study. Section~\ref{sec:Discussion} concludes.
%in contrast to existing first order methods which adaptive mechanism is through  

\subsection{Related Work}
%Method that adapt the learning rate  are by now standard techniques... toolkit....  
\citet*{duchi2011adaptive},  simultaneously to \citet{mcmahan2010adaptive}, were the first to suggest AdaGrad---an adaptive gradient based method, and prove its efficiency in tackling online convex  problems.
AdaGrad was subsequently adjusted to  the deep-learning setting to yield the 
RMSprop, \cite{tieleman2012lecture}, and Adadelta, \cite{zeiler2012adadelta}, heuristics. 
\citet{kingma2014adam}, combined ideas from AdaGrad together with momentum machinery, \cite{nesterov1983method}, and devised Adam---a popular adaptive algorithm which is often the method of choice in deep-learning applications.

%Adam \cite{}, is another  popular adaptive algorithm which combines ideas from AdaGrad together with momentum machinery \cite{}, and is often the method of choice in deep-learning applications. 

An optimization procedure is called universal if it implicitly adapts to the objective's smoothness.
In \cite{nesterov2015universal}, universal gradient methods are devised for the general convex setting. 
Concretely, without the knowledge of the smoothness parameter, these methods  attain the standard $O(1/T)$ and accelerated $O(1/T^2)$ rates for smooth objectives, and an $O(1/\sqrt{T})$ rate in the non-smooth case. The core technique in this work  is a line search procedure which estimates the smoothness parameter in every iteration. 
For strongly-convex and smooth objectives, line search techniques,\cite{nocedal}, ensure linear convergence rate, \emph{without} the knowledge of the smoothness parameter. However, line search is not ``fully universal", in the sense that it holds no guarantees in the non-smooth case.
For the latter setting we present a method which ``fully universal" (Thm.~\ref{thm:SC-AdaNGD2}), nevertheless it \emph{requires} the strong-convexity parameter.
Composite optimization methods, \cite{composite}, may obtain fast rates  even for non-smooth objectives. Nevertheless,  proximal-GD may separately access the  gradients of the "smooth part" of the objective, which is a more refined notion than the normal oracle access to the (sub-)gradient of the whole objective.
%

%Nevertheless, this technique requires exact zero order information, which makes it less appropriate for stochastic settings. 

The usefulness of employing normalized gradients was demonstrated in several non-convex  scenarios.
In the context of  quasi-convex  optimization, \cite{nesterov1984minimization}, and \cite{hazan2015beyond}, established  convergence guarantees for the offline/stochastic settings. More recently, it was shown in \cite{levy2016power},  that normalized gradient descent is more appropriate than GD for saddle-evasion scenarios. 

In the context of stochastic optimization,  the effect of minibatch size was extensively investigated throughout the past years, \cite{dekel2012optimal,cotter2011better,shalev2013accelerated,li2014efficient,takavc2015distributed,jain2016parallelizing}. Yet, all of these studies: $\textbf{(i)}$ assume a smooth expected loss, $\textbf{(ii)}$ discuss fixed minibatch sizes. Conversely, our work discusses adaptive minibatch sizes, and applies to both smooth/non-smooth expected losses.

%The usefulness of gradient normalization  for   quasi-convex objectives was demonstrated in \cite{nest} and in \cite{Us}, where convergence guarantees were established for the offline/stochastic settings. More recently, it was shown in \cite{saddle}  that normalized gradient descent is more appropriate than GD for saddle-evasion scenarios.

%
%It is well known that in the stochastic convex optimization setting usi
% Due to their simplicity and computational efficiency, first order algorithms are the methods of choice  for 
%  challenging Machine Learning scenarios. 
% Naturally, such methods may adjust to the data through adapting the learning rate , i.e., through changing the scale of steps in between regions/directions. This applies to many popular procedures e.g. Adagrad [] adam[] adadelta[],  and universal GD[].
%\kl{say something about our technique - normalization-lr adaptation - improtance weighting}
%  
% Adaptive first order methods like 
%  
% So far the most well known mechanism thg
% First order methods like AdaGrad, Adam and Universal GD adapt to 
   
%Some of the most challenging machine learning scenarios involve 
%Finding methods that adapt to the data was proven to be 

\subsection{Preliminaries}
\paragraph{Notation:}  $\|\cdot\|$ denotes the $\ell_2$ norm, $G$ denotes a bound on the norm of the objective's gradients, and $[T]:= \{1,\ldots,T\}$.
For a  set $\K\in\reals^d$ its diameter is defined as $D = \sup_{x,y\in\K}\|x-y\|$.
Next we define $H$-strongly-convex/$\beta$-smooth functions,
%A convex function $f:\K\mapsto\reals$ is $H$-strongly convex over $\K$ if, 
\begin{align*}
%&\textbf{$H$-strong-convexity,}\\
&f(y) \geq f(x) + \nabla f(x)^\top(y-x) + \frac{H}{2}\|x - y\|^2;\quad \forall x,y \in \K  \quad \textbf{($H$-strong-convexity)} \\
%&\textbf{$\beta$-smoothness,}\\
&f(y) \leq f(x) + \nabla f(x)^\top(y-x) + \frac{\beta}{2}\|x - y\|^2 ;\quad \forall x,y \in \K \quad \textbf{($\beta$-smoothness)} 
\end{align*}

\subsubsection{AdaGrad}
%%%AdaGrad- Alg
%%%%%%%%%%%%%%%%%%%%%%%%%%%%%%%%%%%%%%%%%%%%%%%%%%%%%%%%%%%%%%
\begin{algorithm}[t]
\caption{Adaptive  Gradient Descent ($\text{AdaGrad}$) }
\label{algorithm:AdaGrad}
\begin{algorithmic}
\STATE \textbf{Input}: \#Iterations $T$, $x_1\in \reals^d$, set $\K$ 
\STATE {Set}: $Q_0 =0 $
\FOR{$t=1 \ldots T$ }
\STATE {Calculate:} $g_t= \nabla f_t(x_t)$
\STATE {Update:}
$$Q_{t} = Q_{t-1} + \|g_t\|^2 $$
\STATE {Set} $\eta_t = D/\sqrt{2Q_t}$
\STATE {Update:}
$$x_{t+1}= \Pi_{\K}\left( x_{t}-\eta_t {g}_{t}\right)$$
\ENDFOR
\end{algorithmic}
\end{algorithm}
The adaptive methods presented in this paper lean on AdaGrad (Alg.~\ref{algorithm:AdaGrad}), a well known online optimization method which employs an adaptive learning rate.
The following theorem states $\text{AdaGrad}$'s guarantees,~\cite{duchi2011adaptive},
%%%AdaGrad- guarantee
%%%%%%%%%%%%%%%%%%%%%%%%%%%%%%%%%%%%%%%%%%%%%%%%%%%%%%%%%%%%%%
\begin{theorem}\label{theorem:AdaGrad}
Let $\K$ be a convex set with diameter $D$. Let $\{f_t\}_{t=1}^T$ be an arbitrary sequence 
of convex loss functions. Then Algorithm~\ref{algorithm:AdaGrad} guarantees the following regret;
\begin{align*}
\sum_{t=1}^T f_t(x_t) - \min_{x\in\K}\sum_{t=1}^T f_t(x) \le \sqrt{2D^2\sum_{t=1}^T \|g_t\|^2}~.
\end{align*}
\end{theorem}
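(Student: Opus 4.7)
The plan is to follow the textbook projected-gradient-descent analysis, paying extra care to the fact that the step size $\eta_t = D/\sqrt{2Q_t}$ is data-dependent and changes every round. Two workhorse tricks are needed on top of the standard manipulations: Abel summation for the distance terms (using that $\eta_t$ is non-increasing) and the classical telescoping lemma $\sum_{t=1}^T \|g_t\|^2/\sqrt{Q_t} \le 2\sqrt{Q_T}$ for the squared-gradient terms.

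First I would linearize using convexity, writing $f_t(x_t)-f_t(x^*) \le g_t^\top(x_t-x^*)$ for the comparator $x^* \in \argmin_{x \in \K}\sum_{t=1}^T f_t(x)$. Then, from the update rule together with non-expansiveness of the Euclidean projection $\Pi_\K$, obtain the standard one-step inequality
\[
\|x_{t+1}-x^*\|^2 \le \|x_t-x^*\|^2 - 2\eta_t\, g_t^\top(x_t-x^*) + \eta_t^2\|g_t\|^2.
\]
Rearranging and summing yields
\[
\sum_{t=1}^T g_t^\top(x_t-x^*) \le \underbrace{\sum_{t=1}^T \frac{\|x_t-x^*\|^2 - \|x_{t+1}-x^*\|^2}{2\eta_t}}_{\text{(A)}} + \underbrace{\sum_{t=1}^T \frac{\eta_t}{2}\|g_t\|^2}_{\text{(B)}}.
\]

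For term (A), I would apply Abel summation. Because $Q_t$ is non-decreasing, $1/\eta_t$ is non-decreasing, so the rearranged sum has non-negative coefficients $(1/\eta_t - 1/\eta_{t-1})$ multiplying $\|x_t-x^*\|^2$. Bounding every distance by the diameter $D$ and telescoping yields $\text{(A)} \le D^2/(2\eta_T) = D\sqrt{Q_T/2}$. For term (B), plug in $\eta_t = D/\sqrt{2Q_t}$ to get $\text{(B)} = \frac{D}{2\sqrt 2}\sum_{t=1}^T \|g_t\|^2/\sqrt{Q_t}$, and apply the lemma $\sum_{t=1}^T \|g_t\|^2/\sqrt{Q_t} \le 2\sqrt{Q_T}$ (proved by the integral/telescoping bound $\|g_t\|^2/\sqrt{Q_t} \le 2(\sqrt{Q_t}-\sqrt{Q_{t-1}})$), giving $\text{(B)} \le D\sqrt{Q_T/2}$.

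Adding the two bounds gives $\sum_{t=1}^T g_t^\top(x_t-x^*) \le 2D\sqrt{Q_T/2} = \sqrt{2D^2 Q_T} = \sqrt{2D^2 \sum_{t=1}^T \|g_t\|^2}$, which is exactly the claimed regret bound. The only genuinely non-routine step is the telescoping lemma used in (B); the rest is careful bookkeeping around the non-increasing adaptive step size, with the nice feature that the two terms balance exactly to deliver a clean bound in terms of the cumulative squared gradient norm.
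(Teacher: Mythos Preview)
Your proposal is correct and follows essentially the same argument as the paper: the paper likewise derives the one-step projection inequality, applies Abel summation with $\eta_0=\infty$ and the diameter bound to control the distance terms, and invokes the same telescoping lemma $\sum_{t}\|g_t\|^2/\sqrt{Q_t}\le 2\sqrt{Q_T}$ (attributed there to McMahan--Streeter) for the gradient terms, arriving at the identical two-term split $D\sqrt{Q_T/2}+D\sqrt{Q_T/2}$.
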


%%%General Convex Case
%%%%%%%%%%%%%%%%%%%%%%%%%%%%%%%%%%%%%%%%%%%%%%%%%%%%%%%%%%%%%%
%%%%%%%%%%%%%%%%%%%%%%%%%%%%%%%%%%%%%%%%%%%%%%%%%%%%%%%%%%%%%%
%%%%%%%%%%%%%%%%%%%%%%%%%%%%%%%%%%%%%%%%%%%%%%%%%%%%%%%%%%%%%%
%%%%%%%%%%%%%%%%%%%%%%%%%%%%%%%%%%%%%%%%%%%%%%%%%%%%%%%%%%%%%%%%%%%%%%%%%%%%%%%%%%%%%%%%%%%%%%%%%%%%%%%%%%%%%%%%%%%%%%%%%%%%%%%%%%%%%%%%%%%%%%%%%%%%%%%%%%%%%%%%%%%%%%%%%%%%%%%%%%%%%%%%%%
\section{Adaptive Normalized Gradient Descent (AdaNGD)} \label{sec:AdaNGD_general}
In this section we discuss the convex optimization setting and introduce our  $\text{AdaNGD}_k$ algorithm.
We first derive a general convergence rate which holds for any $k\in \reals$.  Subsequently, we elaborate on  the 
$k=1,2$ cases which exhibit universality as well as adaptive guarantees that may be substantially better compared to standard methods.

Our method $\text{AdaNGD}_k$   is depicted in Alg.~\ref{algorithm:AdaNGD}. 
This algorithm can be thought of as an  \emph{online to offline conversion scheme} which utilizes  AdaGrad (Alg.~\ref{algorithm:AdaGrad}) as a black box and eventually outputs a weighted sum of the online queries. Indeed, for a fixed $k\in\reals$, it is not hard to notice that $\text{AdaNGD}_k$ is equivalent to invoking AdaGrad with the following loss sequence $\{\tf_t(x) := {g_t^\top x}/{\|g_t\|^k} \}_{t=1}^T$.
And eventually weighting each query point inversely proportional to the $k$'th power norm of its gradient.
The reason behind this scheme is that in offline optimization it makes sense to dramatically  
reduce the learning rate upon uncountering a point with a very small gradient. 
For $k\geq1$, this is achieved by invoking AdaGrad with gradients normalized by their $k$'th power norm.
Since we discuss constrained optimization, we use the projection operator
defined as, 
$$\Pi_\K(y):=\min_{x\in \K}\|x-y\| ~.$$
%Our $\text{AdaNGD}_k$  algorithm is presented in Algorithm~\ref{algorithm:AdaNGD}. Note that the update rule employs gradients normalized by their $k$'th power norm, which are also used to set the learning rate.
%Moreover, the algorithm's output is a weighted average of the query points, and each weight is inversely proportional to the $k$'th power norm of the gradient.
%%Note that $\text{AdaNGD}_k$ outputs a weighted average of the query points, and each weight is inversely proportional to the $k$'th power norm of the gradient, also note that algorithm employs gradients normalized by their $k$'th power norm. 
%The parameter $k\in \reals$ is an input for the algorithm.
%Since we discuss constrained optimization, we use the projection operator
%defined as, 
%$$\Pi_\K(y)~:=~\min_{x\in \K}\|x-y\| ~.$$
%%%AdaNGD_k Alg
%%%%%%%%%%%%%%%%%%%%%%%%%%%%%%%%%%%%%%%%%%%%%%%%%%%%%%%%%%%%%%
\begin{algorithm}[t]
\caption{Adaptive Normalized Gradient Descent ($\text{AdaNGD}_{k}$) }
\label{algorithm:AdaNGD}
\begin{algorithmic}
\STATE \textbf{Input}: \#Iterations $T$, $x_1\in \reals^d$, set $\K$ , parameter $k$
\STATE {Set}: $Q_0 =0 $
\FOR{$t=1 \ldots T-1$ }
\STATE {Calculate:} $g_t= \nabla f(x_t),\; \hat{g}_t = {g_t}/{\|g_t\|^k}$
\STATE {Update:}
$$Q_{t} = Q_{t-1} + {1}/{\| g_t\|^{2(k-1)}} $$
\STATE {Set} $\eta_t = D/\sqrt{2Q_t}$
\STATE {Update:}
$$x_{t+1}= \Pi_{\K}\left( x_{t}-\eta_t \hat{g}_{t}\right)$$
\ENDFOR
\STATE \textbf{Return}:  
$ \bar{x}_T= \sum_{t=1}^T\frac{1/\|g_t\|^{k}}{\sum_{\tau=1}^T 1/\|g_\tau\|^{k}}x_t$
\end{algorithmic}
\end{algorithm}
%%%AdaNGD_k lemma
%%%%%%%%%%%%%%%%%%%%%%%%%%%%
%%%%%%%%%%%%%%%%%%%%%%%%%%%%
The following lemma  states the guarantee of AdaNGD for a general $k$:
\begin{lemma}\label{lem:AdaNGDnonsmooth_k}
Let  $k\in  \reals$, $\K$ be a convex set with diameter $D$, and $f$ be a convex function; 
Also let  $\bar{x}_T$ be the output of  $\text{AdaNGD}_k$ (Algorithm~\ref{algorithm:AdaNGD}), then the following holds:
\begin{align*}
f( \bar{x}_T)- \min_{x\in\K}f(x) 
\le 
\frac{\sqrt{2 D^2\sum_{t=1}^T 1/\|g_t\|^{2(k-1)}   }}{\sum_{t=1}^T 1/\|g_t\|^k} %\leq \frac{GD}{\sqrt{T}}  
\end{align*}
\end{lemma}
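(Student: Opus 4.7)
My plan is to reduce the statement directly to the AdaGrad regret bound (Theorem~\ref{theorem:AdaGrad}) by running AdaGrad on a cleverly chosen surrogate loss sequence, and then to convert the resulting weighted regret into a suboptimality bound for the averaged iterate $\bar{x}_T$ via Jensen's inequality.

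The first step is to observe the equivalence the text already hints at: define the linear losses $\tilde{f}_t(x) = g_t^\top x / \|g_t\|^k$, whose gradient is exactly $\hat{g}_t = g_t/\|g_t\|^k$. Since AdaNGD$_k$'s update coincides with running AdaGrad on this sequence (the $Q_t$ accumulator sums $\|\hat{g}_t\|^2 = 1/\|g_t\|^{2(k-1)}$, and the step size and projection match), Theorem~\ref{theorem:AdaGrad} immediately yields, for any $x^\ast \in \K$,
\begin{equation*}
\sum_{t=1}^T \bigl(\tilde{f}_t(x_t) - \tilde{f}_t(x^\ast)\bigr) \;\leq\; \sqrt{2D^2 \sum_{t=1}^T 1/\|g_t\|^{2(k-1)}}.
\end{equation*}

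Next I would use convexity of the (true, fixed) objective $f$: for each $t$, convexity gives $f(x_t) - f(x^\ast) \leq g_t^\top(x_t - x^\ast)$, and dividing by the positive scalar $\|g_t\|^k$ gives
\begin{equation*}
\frac{f(x_t) - f(x^\ast)}{\|g_t\|^k} \;\leq\; \tilde{f}_t(x_t) - \tilde{f}_t(x^\ast).
\end{equation*}
Summing over $t$ and combining with the AdaGrad bound above yields a bound on $\sum_t (f(x_t) - f(x^\ast))/\|g_t\|^k$.

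Finally, I would apply Jensen's inequality to the output $\bar{x}_T = \sum_t w_t x_t$ with weights $w_t = (1/\|g_t\|^k)/\sum_\tau (1/\|g_\tau\|^k)$: by convexity of $f$,
\begin{equation*}
f(\bar{x}_T) - f(x^\ast) \;\leq\; \sum_{t=1}^T w_t\bigl(f(x_t) - f(x^\ast)\bigr) \;=\; \frac{\sum_t (f(x_t) - f(x^\ast))/\|g_t\|^k}{\sum_t 1/\|g_t\|^k},
\end{equation*}
and plugging in the bound from the previous step gives exactly the claimed inequality. Taking $x^\ast \in \argmin_{x \in \K} f(x)$ completes the proof.

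There is no real obstacle here; the content of the lemma is precisely that the choice of surrogate loss $\tilde{f}_t$, the step-size recipe, and the averaging weights $w_t$ were designed in concert so that the AdaGrad regret on $\{\tilde{f}_t\}$ transfers cleanly into a suboptimality bound for $f$ at the weighted average. The only point to be slightly careful about is the direction of the inequality when dividing by $\|g_t\|^k$ (this is fine since $\|g_t\|^k > 0$, and in case $g_t = 0$ for some $t$, one immediately has $f(x_t) = \min_\K f$ and the claim is trivial by returning $\bar{x}_T = x_t$).
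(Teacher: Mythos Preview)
Your proposal is correct and follows essentially the same approach as the paper: recognize that $\text{AdaNGD}_k$ is AdaGrad applied to the linear surrogates $\tilde{f}_t(x)=g_t^\top x/\|g_t\|^k$, invoke Theorem~\ref{theorem:AdaGrad}, and then combine convexity (the gradient inequality) with Jensen's inequality on the weighted average $\bar{x}_T$. The only cosmetic difference is the order in which you apply Jensen versus the gradient inequality, which is immaterial.
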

\begin{proof}[Proof sketch]
Notice that the $\text{AdaNGD}_k$ algorithm is equivalent to applying AdaGrad to the following loss sequence: $\{\tf_t(x) := {g_t^\top x}/{\|g_t\|^k} \}_{t=1}^T$. Thus, applying Theorem~\ref{theorem:AdaGrad}, and using the definition of $\bar{x}_T$ together with Jensen's inequality the lemma follows.
\end{proof}

For $k=0$,  Algorithm~\ref{algorithm:AdaNGD} becomes  AdaGrad (Alg.~\ref{algorithm:AdaGrad}).
Next we focus on the cases where $k=1,2$, showing improved adaptive rates and universality compared to GD/AdaGrad.
These improved rates are  attained thanks to the adaptivity of the learning rate: 
when query points with small gradients are encountered, $\text{AdaNGD}_k$ (with $k\geq 1$)  reduces the learning rate, thus focusing on the region around these points. The hindsight weighting further emphasizes points with smaller gradients.

\subsection{$\text{AdaNGD}_1$}
Here we show that $\text{AdaNGD}_1$ enjoys a rate of $O(1/\sqrt{T})$  in the non-smooth convex setting, and a fast rate of $O(1/T)$ in the smooth setting. We emphasize that the same algorithm enjoys these rates simultaneously, without any prior knowledge of the smoothness or of the gradient norms.

From Algorithm~\ref{algorithm:AdaNGD} it can be noted that for $k=1$ the learning rate becomes independent of the gradients, i.e. $\eta_t = D/\sqrt{2t}$,
the update is made according to the direction of the gradients, and the weighting is inversely proportional to the norm of the gradients. 
 The following  Theorem establishes the guarantees of $\text{AdaNGD}_1$ (see  proof in Appendix~\ref{app:adangd1}),
 %\newpage
 %%%AdaNGD_1 lemma
%%%%%%%%%%%%%%%%%%%%%%%%%%%%
%%%%%%%%%%%%%%%%%%%%%%%%%%%%
 \begin{theorem}\label{thm:AdaNGDnonsmooth}
Let  $k=1$, $\K$ be a convex set with diameter $D$, and $f$ be a convex function; 
Also let  $\bar{x}_T$ be the outputs of  $\text{AdaNGD}_1$ (Alg.~\ref{algorithm:AdaNGD}), then the following holds:
\begin{align*}
f( \bar{x}_T)- \min_{x\in\K}f(x) 
\le
 \frac{\sqrt{2 D^2T  }}{\sum_{t=1}^T 1/\|g_t\|}  \le  \frac{\sqrt{2}GD}{\sqrt{T}} ~.
\end{align*}
Moreover, if $f$ is also $\beta$-smooth and the global minimum $x^* = \arg\min_{x\in\reals^n}f(x)$ belongs to $\K$, then:
\begin{align*}
f( \bar{x}_T)- \min_{x\in\K}f(x) 
\le
\frac{D\sqrt{T}}{\sum_{t=1}^T 1/\|g_t\|} 
\le
  \frac{4\beta D^2}{T}~. 
\end{align*}
\end{theorem}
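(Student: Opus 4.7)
My plan is to build on Lemma~\ref{lem:AdaNGDnonsmooth_k} instantiated at $k=1$. With $k=1$ the exponent $2(k-1)=0$, so $\sum_{t=1}^T 1/\|g_t\|^{2(k-1)}=T$ and the Lemma directly gives
\[
f(\bar{x}_T) - \min_{x\in\K} f(x) \le \frac{\sqrt{2D^2 T}}{\sum_{t=1}^T 1/\|g_t\|},
\]
which is the first inequality of the non-smooth claim. The second non-smooth inequality follows in one line: since $\|g_t\|\le G$, we have $\sum_{t=1}^T 1/\|g_t\| \ge T/G$, and substitution yields $\sqrt{2}GD/\sqrt{T}$.

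The smooth case is the main content. Since the first inequality there has essentially the same form (up to constants), I focus on upgrading the rate from $O(1/\sqrt T)$ to $O(\beta D^2/T)$; this requires showing that the denominator $\sum_t 1/\|g_t\|$ actually grows like $T^{3/2}$ rather than $T$. My approach is a short bootstrap. Because $x^*\in\K$ is a global minimum, $\beta$-smoothness gives the standard co-coercivity-style bound $\|g_t\|^2 \le 2\beta(f(x_t)-f(x^*))$, which rearranges to $\|g_t\| \le 2\beta(f(x_t)-f(x^*))/\|g_t\|$. Next I would use the intermediate inequality that actually drives the proof of Lemma~\ref{lem:AdaNGDnonsmooth_k}: applying Theorem~\ref{theorem:AdaGrad} to the normalized losses $\tf_t(x)=g_t^\top x/\|g_t\|$ and invoking convexity of $f$ gives
\[
\sum_{t=1}^T \frac{f(x_t)-f(x^*)}{\|g_t\|} \le \sqrt{2 D^2 T}.
\]
Summing the co-coercivity bound over $t$ and combining with this yields $\sum_{t=1}^T \|g_t\| \le 2\beta\sqrt{2D^2 T}$. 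Finally, the AM-HM (equivalently Cauchy-Schwarz) inequality $\bigl(\sum_t 1/\|g_t\|\bigr)\bigl(\sum_t \|g_t\|\bigr) \ge T^2$ converts this aggregate control into the sharper lower bound $\sum_{t=1}^T 1/\|g_t\| \ge T^{3/2}/(2\sqrt{2}\beta D)$. Substituting back into the Lemma's bound, the $T$'s combine to produce exactly the claimed $4\beta D^2/T$ rate.

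The main obstacle is the bootstrap in the smooth case, where one must couple the (unknown) smoothness parameter to the AdaGrad-type regret bound without $\beta$ ever entering the step size. The co-coercivity inequality plus AM-HM supply precisely this coupling: smoothness turns the already-established weighted-regret bound on $\sum_t (f(x_t)-f(x^*))/\|g_t\|$ into an upper bound on $\sum_t \|g_t\|$, and AM-HM reverses that into a lower bound on $\sum_t 1/\|g_t\|$. The remaining work is routine bookkeeping of constants; the only minor nuisance is a cosmetic $\sqrt{2}$ discrepancy between the Lemma's constant and the intermediate bound $D\sqrt{T}/\sum_t 1/\|g_t\|$ written in the theorem, which does not affect the final $O(1/T)$ rate.
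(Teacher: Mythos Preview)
Your proposal is correct and matches the paper's proof essentially line for line: the paper also instantiates Lemma~\ref{lem:AdaNGDnonsmooth_k} at $k=1$, uses the smoothness inequality $\|g_t\|^2 \le 2\beta(f(x_t)-f(x^*))$ together with the AdaGrad regret bound on the normalized losses to obtain $\sum_t \|g_t\| \le 2\sqrt{2}\beta D\sqrt{T}$, and then applies the arithmetic--harmonic mean inequality (phrased there as convexity of $z\mapsto 1/z$) to convert this into $\sum_t 1/\|g_t\| \ge T^{3/2}/(2\sqrt{2}\beta D)$. Your observation about the stray $\sqrt{2}$ is also accurate; the paper's own proof in the appendix carries the $\sqrt{2}$ through to the end.
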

\begin{proof}[Proof sketch]
The data dependent bound is a direct corollary of Lemma~\ref{lem:AdaNGDnonsmooth_k}.
The general case bound holds directly by using $\|g_t\|\leq G$.
The bound for the smooth case is proven by showing $\sum_{t=1}^T \|g_t\| \leq O(\sqrt{T})$.
This translates to a lower bound $\sum_{t=1}^T 1/\|g_t\|\geq \Omega(T^{3/2})$, which concludes the proof.
\end{proof}

The data dependent bound in Theorem \ref{thm:AdaNGDnonsmooth} may be substantially better compared to the bound of the GD/AdaGrad.  As an example, assume that
 half of the gradients encountered during the run of the algorithm are of   $O(1)$ norms,  and the other gradient norms decay
 proportionally to $O(1/t)$. In this case the guarantee of GD/AdaGrad is $O(1/\sqrt{T})$, whereas 
 $\text{AdaNGD}_1$ guarantees a bound that behaves like $O(1/T^{3/2})$.
Note that the above example presumes that all algorithms encounter the same gradient magnitudes, which might be untrue. Nevertheless in the smooth case  $\text{AdaNGD}_1$ provably benefits due to its adaptivity. 

\textbf{GD Vs  $\text{AdaNGD}_1$ in the smooth case:}
In order to achieve the $O(1/T)$ rate for smooth objectives GD employs a \emph{constant} learning rate, $\eta_t=1/\beta$. 
It is worth to compare this GD algorithm with $\text{AdaNGD}_1$  in the smooth case:
%It is worth to compare $\text{AdaNGD}_1$ 
For both methods the steps become smaller as the algorithm progresses.
However, the mechanism is different: in GD the learning rate is constant,  but the gradient norms decay;
 while in $\text{AdaNGD}_1$ the learning rate is decaying, but the norm of the normalized gradients is constant.

\subsection{$\text{AdaNGD}_2$}
Here we show that $\text{AdaNGD}_2$ enjoys comparable guarantees to $\text{AdaNGD}_1$
in the general/smooth case.
Similarly to $\text{AdaNGD}_1$ the same algorithm enjoys these rates simultaneously, without any prior knowledge of the smoothness or of the gradient norms.
 The following  Theorem establishes the guarantees of $\text{AdaNGD}_2$  (see  proof in Appendix~\ref{app:adangd2}),
 %%%AdaNGD_2 lemma
%%%%%%%%%%%%%%%%%%%%%%%%%%%%
%%%%%%%%%%%%%%%%%%%%%%%%%%%%
\begin{theorem}\label{thm:AdaNGD2}
Let  $k=2$, $\K$ be a convex set with diameter $D$, and $f$ be a convex function; 
Also let  $\bar{x}_T$ be the outputs of  $\text{AdaNGD}_2$ (Alg.~\ref{algorithm:AdaNGD}), then the following holds:
\begin{align*}
f( \bar{x}_T)- \min_{x\in\K}f(x) \leq \frac{\sqrt{2 D^2  }}{\sqrt{\sum_{t=1}^T 1/\|g_t\|^2}} \le 
\frac{\sqrt{2}GD}{\sqrt{T}} ~.
\end{align*}
Moreover, if $f$ is also $\beta$-smooth and the global minimum $x^* = \arg\min_{x\in\reals^n}f(x)$ belongs to $\K$, then:
\begin{align*}
f( \bar{x}_T)- \min_{x\in\K}f(x) 
\le
\frac{\sqrt{2 D^2  }}{\sqrt{\sum_{t=1}^T 1/\|g_t\|^2}} 
\le 
\frac{4\beta D^2}{T}
~.
\end{align*}
\end{theorem}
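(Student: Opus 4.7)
The plan is to follow the same template as the AdaNGD$_1$ proof, substituting $k=2$ throughout, and relying on Lemma~\ref{lem:AdaNGDnonsmooth_k} as the starting point.

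First, for the data-dependent bound, I would just apply Lemma~\ref{lem:AdaNGDnonsmooth_k} with $k=2$, which yields
\[
f(\bar x_T)-\min_{x\in\K}f(x)\le\frac{\sqrt{2D^{2}\sum_{t=1}^{T}1/\|g_t\|^{2}}}{\sum_{t=1}^{T}1/\|g_t\|^{2}}=\frac{\sqrt{2D^{2}}}{\sqrt{\sum_{t=1}^{T}1/\|g_t\|^{2}}},
\]
so the left equality in both displays of the theorem is immediate.  The general worst-case bound $\sqrt{2}GD/\sqrt{T}$ then follows from $\|g_t\|\le G$, which forces $\sum_{t=1}^{T}1/\|g_t\|^{2}\ge T/G^{2}$.

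The main step is the smooth case.  I would lower-bound $\sum_{t=1}^{T}1/\|g_t\|^{2}$ directly using smoothness.  Revisiting the proof of Lemma~\ref{lem:AdaNGDnonsmooth_k}, the core AdaGrad regret inequality applied to $\tf_t(x)=g_t^{\top}x/\|g_t\|^{2}$ together with convexity of $f$ gives
\[
\sum_{t=1}^{T}\frac{f(x_t)-f(x^{*})}{\|g_t\|^{2}}\le\sum_{t=1}^{T}\frac{g_t^{\top}(x_t-x^{*})}{\|g_t\|^{2}}\le\sqrt{2D^{2}\sum_{t=1}^{T}1/\|g_t\|^{2}}.
\]
In the $\beta$-smooth case with $x^{*}\in\K$, the standard identity $\|g_t\|^{2}\le 2\beta(f(x_t)-f(x^{*}))$ rearranges to $(f(x_t)-f(x^{*}))/\|g_t\|^{2}\ge 1/(2\beta)$.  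Summing the constant $1/(2\beta)$ over $t$ yields
\[
\frac{T}{2\beta}\le\sqrt{2D^{2}\sum_{t=1}^{T}1/\|g_t\|^{2}},
\]
and squaring gives the key lower bound $\sum_{t=1}^{T}1/\|g_t\|^{2}\ge T^{2}/(8\beta^{2}D^{2})$.  Plugging this back into the data-dependent bound yields $f(\bar x_T)-f(x^{*})\le\sqrt{2D^{2}}\cdot 2\sqrt{2}\beta D/T=4\beta D^{2}/T$, as desired.

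The hardest part (and the only place where real thought is required) is the smoothness lower bound on $\sum_t 1/\|g_t\|^{2}$.  Conceptually, it is the AdaNGD$_2$ analogue of the ``$\sum_t\|g_t\|\le O(\sqrt T)$'' step used in Theorem~\ref{thm:AdaNGDnonsmooth}, but the algebra is actually cleaner here: because the weights are $1/\|g_t\|^{2}$ and smoothness also involves $\|g_t\|^{2}$, the gradient norms cancel immediately and we get a constant lower bound on each summand, avoiding the Cauchy--Schwarz detour needed for AdaNGD$_1$.
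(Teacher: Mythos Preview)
Your proposal is correct and matches the paper's proof essentially line for line: the data-dependent bound is Lemma~\ref{lem:AdaNGDnonsmooth_k} with $k=2$, the worst-case bound follows from $\|g_t\|\le G$, and the smooth-case argument is exactly the paper's chain $T=\sum_t\|g_t\|^2/\|g_t\|^2\le 2\beta\sum_t(f(x_t)-f(x^*))/\|g_t\|^2\le 2\beta\cdot\text{(AdaGrad regret)}\le 2\sqrt{2}\beta D\sqrt{\sum_t 1/\|g_t\|^2}$, just reordered. Your observation that the $\|g_t\|^2$ factors cancel cleanly here (unlike the $k=1$ case) is also implicit in the paper's proof.
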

\begin{proof}[Proof sketch]
The data dependent bound is a direct corollary of Lemma~\ref{lem:AdaNGDnonsmooth_k}.
The general case bound holds directly by using $\|g_t\|\leq G$.
The bound for the smooth case is proven by showing $\sum_{t=1}^T 1/\|g_t\|^2 \geq \Omega(T^2)$, which concludes the proof.
\end{proof}

It is interesting to note that $\text{AdaNGD}_2$ will have always performed better than AdaGrad, had  both algorithms encountered the same gradientÊ norms. This is due to the well known inequality between arithmetic and harmonic means, \cite{bullen2013means},
%@book{bullen2013means,
%  title={Means and their Inequalities},
%  author={Bullen, Peter S and Mitrinovic, Dragoslav S and Vasic, M},
%  volume={31},
%  year={2013},
%  publisher={Springer Science \& Business Media}
%}
% which states that for any sequenceof non-negative numbers $\{a_1,\ldots,a_T \}$ the following applies:
\begin{align*}
\frac{1}{T}\sum_{t=1}^Ta_t \geq \frac{1}{\frac{1}{T}\sum_{t=1}^T 1/a_t}, \qquad \forall \{a_t\}_{t=1}^T \subset \reals_{+}~,
\end{align*}
which directly implies,
\begin{align*} %\label{eq:Ineq_AdaNGD2AdaGrad}
\frac{1  }{\sqrt{\sum_{t=1}^T 1/\|g_t\|^2}}
 \le
 \frac{1}{T} \sqrt{\sum_{t=1}^T \|g_t\|^2}~.
\end{align*}

%which means that $\text{AdaNGD}_2$ performs better than AdaGrad, given that both encounter the same gradients. 
%The guarantee of  $\text{AdaNGD}_2$ might be substantially better than the guarantee of $\text{AdaNGD}_1$:
%consider the case where all gradients encountered during the run of the algorithm are of $O(1)$ norms,
%but for a single gradient with an $O(1/T)$ norm. Then  $\text{AdaNGD}_2$ guarantees  a bound of $O(1/T)$ on the excess loss,
%yet  $\text{AdaNGD}_1$/AdaGrad/GD  only guarantee an $O(1/\sqrt{T})$ bound.

%\newpage
%%%Strongly-Convex
%%%%%%%%%%%%%%%%%%%%%%%%%%%%%%%%%%%%%%%%%%%%%%%%%%%%%%%%%%%%%%
%%%%%%%%%%%%%%%%%%%%%%%%%%%%%%%%%%%%%%%%%%%%%%%%%%%%%%%%%%%%%%
%%%%%%%%%%%%%%%%%%%%%%%%%%%%%%%%%%%%%%%%%%%%%%%%%%%%%%%%%%%%%%
%%%%%%%%%%%%%%%%%%%%%%%%%%%%%%%%%%%%%%%%%%%%%%%%%%%%%%%%%%%%%%%%%%%%%%%%%%%%%%%%%%%%%%%%%%%%%%%%%%%%%%%%%%%%%%%%%%%%%%%%%%%%%%%%%%%%%%%%%%%%%%%%%%%%%%%%%%%%%%%%%%%%%%%%%%%%%%%%%%%%%%%%%%
\section{Adaptive NGD for Strongly Convex Functions} \label{sec:AdaNGD_StronglyConvex}
\label{sec:AdaNGDStronglyCvx}
Here we discuss the offline optimization setting of strongly convex objectives. We introduce our  $\text{SC-AdaNGD}_k$ algorithm,
and present  convergence rates for general $k\in \reals$.  Subsequently, we elaborate on  the 
$k=1,2$ cases which exhibit universality as well as adaptive guarantees that may be substantially better compared to standard methods.

 %%%SC-AdaNGD_k ALG
%%%%%%%%%%%%%%%%%%%%%%%%%%%%
%%%%%%%%%%%%%%%%%%%%%%%%%%%%
\begin{algorithm}[t]
\caption{Strongly-Convex  AdaNGD ($\text{SC-AdaNGD}_{k}$) }
\label{algorithm:SC-AdaNGD}
\begin{algorithmic}
\STATE \textbf{Input}: \#Iterations $T$, $x_1\in \reals^d$, set $\K$, strong-convexity $H$, parameter $k$
\STATE {Set}: $Q_0 =0 $
\FOR{$t=1 \ldots T-1$ }
\STATE {Calculate:} $g_t= \nabla f(x_t),\; \hat{g}_t = {g_t}/{\|g_t\|^k}$
\STATE {Update:}
$$Q_{t} = Q_{t-1} + {1}/{\| g_t\|^{k}} $$
\STATE {Set} $\eta_t = {1}/{HQ_t}$
\STATE {Update:}
$$x_{t+1}= \Pi_{\K}\left( x_{t}-\eta_t \hat{g}_{t}\right)$$
\ENDFOR
\STATE \textbf{Return}:  
$ \bar{x}_T= \sum_{t=1}^T\frac{1/\|g_t\|^{k}}{\sum_{\tau=1}^T 1/\|g_\tau\|^{k}}x_t$
\end{algorithmic}
\end{algorithm}

Our $\text{SC-AdaNGD}_k$ algorithm is depicted in Algorithm~\ref{algorithm:SC-AdaNGD}. Similarly to its non strongly-convex counterpart, $\text{SC-AdaNGD}_k$ can be thought of as an online to offline conversion scheme which utilizes an online algorithm which we denote $\text{SC-AdaGrad}$
%Our $\text{SC-AdaNGD}_k$ algorithm is depicted in Algorithm~\ref{algorithm:SC-AdaNGD}. Similarly to its non strongly-convex counterpart, the output is a weighted sum of the query points,  emphasizing points with smaller gradients. 
%However $\text{SC-AdaNGD}_k$ chooses the learning rate differently from  $\text{AdaNGD}_k$, in a way that exploits the strong-convexity of the objective.
The next Lemma states the guarantee of  $\text{AdaNGD}_k$,
%%%SC-AdaNGD_k lem
%%%%%%%%%%%%%%%%%%%%%%%%%%%%
%%%%%%%%%%%%%%%%%%%%%%%%%%%%
\begin{lemma}\label{lem:SC-AdaNGDnonsmooth2}
Let  $k\in  \reals$, and $\K$ be a convex set. Let $f$ be an $H$-strongly-convex function; 
Also let  $\bar{x}_T$ be the outputs of  $\text{SC-AdaNGD}_k$ (Alg.~\ref{algorithm:SC-AdaNGD}), then the following holds:
\begin{align*}
f( \bar{x}_T)- \min_{x\in\K}f(x) 
\leq
 \frac{1}{2H\sum_{t=1}^T \|g_t\|^{-k}}\sum_{t=1}^T \frac{\| g_t\|^{-2(k-1)}}{\sum_{\tau=1}^t \|g_\tau \|^{-k}}~.
\end{align*}
\end{lemma}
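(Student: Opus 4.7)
The plan is to mirror the proof strategy of Lemma~\ref{lem:AdaNGDnonsmooth_k} for the strongly-convex setting. First I would recognize $\text{SC-AdaNGD}_k$ as an online-to-offline conversion: the update is standard online (projected) gradient descent on the linear loss sequence $\tilde f_t(x) = \hat g_t^\top x$ with learning rate $\eta_t = 1/(HQ_t)$, where $Q_t = \sum_{\tau\le t}\|g_\tau\|^{-k}$. The key observation is that although $\tilde f_t$ itself is linear, the $H$-strong-convexity of $f$ gives, for any $x^\ast\in\K$,
\begin{align*}
\hat g_t^\top(x_t-x^\ast) \;\ge\; \frac{f(x_t)-f(x^\ast)}{\|g_t\|^k} + \frac{H}{2\|g_t\|^k}\|x_t-x^\ast\|^2,
\end{align*}
so the normalized loss $\tilde f_t$ behaves as if it were $H/\|g_t\|^k$-strongly-convex with respect to the comparator $x^\ast$.

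Next I would run the standard strongly-convex OGD potential argument with time-varying strong-convexity parameters $H_t := H/\|g_t\|^k$. Expanding $\|x_{t+1}-x^\ast\|^2 \le \|x_t-x^\ast\|^2 - 2\eta_t\hat g_t^\top(x_t-x^\ast) + \eta_t^2\|\hat g_t\|^2$ and substituting the inequality above yields
\begin{align*}
\frac{f(x_t)-f(x^\ast)}{\|g_t\|^k} \;\le\; \Big(\tfrac{1}{2\eta_t}-\tfrac{H_t}{2}\Big)\|x_t-x^\ast\|^2 - \tfrac{1}{2\eta_t}\|x_{t+1}-x^\ast\|^2 + \tfrac{\eta_t}{2}\|\hat g_t\|^2.
\end{align*}
With the algorithm's choice $1/\eta_t = H Q_t = \sum_{\tau\le t} H_\tau$, the coefficient $\tfrac{1}{2\eta_t}-\tfrac{H_t}{2}$ equals $\tfrac{1}{2\eta_{t-1}}$ (with $1/\eta_0=0$), producing the usual telescoping. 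Summing over $t$, the boundary terms vanish (non-negatively dropped) and we obtain the regret-style bound
\begin{align*}
\sum_{t=1}^T \frac{f(x_t)-f(x^\ast)}{\|g_t\|^k} \;\le\; \frac{1}{2}\sum_{t=1}^T \eta_t\|\hat g_t\|^2 \;=\; \frac{1}{2H}\sum_{t=1}^T \frac{\|g_t\|^{-2(k-1)}}{\sum_{\tau=1}^t\|g_\tau\|^{-k}},
\end{align*}
using $\|\hat g_t\|^2 = \|g_t\|^{-2(k-1)}$ and the definition of $\eta_t$.

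Finally, I would apply Jensen's inequality with the weights $w_t = \|g_t\|^{-k}/\sum_\tau\|g_\tau\|^{-k}$ that define $\bar x_T$: convexity of $f$ gives $f(\bar x_T)-f(x^\ast) \le \sum_t w_t (f(x_t)-f(x^\ast))$, which equals the left-hand side of the displayed inequality divided by $\sum_t\|g_t\|^{-k}$. This yields exactly the claimed bound. The main obstacle is the strongly-convex OGD telescoping step with the specific, data-dependent learning rate $\eta_t = 1/(HQ_t)$; once one checks that this choice aligns the recurrence so that consecutive potential coefficients match and the $\|x_t-x^\ast\|^2$ terms collapse, the rest is bookkeeping. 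Everything else (identifying the normalized loss, plugging in $\|\hat g_t\|^2$, Jensen averaging) is routine.
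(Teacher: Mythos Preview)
Your proposal is correct and follows essentially the same approach as the paper. The only cosmetic difference is packaging: the paper first states a standalone SC-AdaGrad regret lemma (OGD with $\eta_t=(\sum_{\tau\le t}H_\tau)^{-1}$ on $H_t$-strongly-convex losses) and then applies it to the auxiliary sequence $\tilde f_t(x)=\|g_t\|^{-k}g_t^\top x+\tfrac{H}{2\|g_t\|^k}\|x-x_t\|^2$, whereas you inline the same telescoping directly and invoke the $H$-strong-convexity of $f$ at the substitution step; the resulting computations and bound are identical.
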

\begin{proof}[Proof sketch]
In Appendix~\ref{app:scadangd_General} we present and analyze $\text{SC-AdaGrad}$. This is an \emph{online} first order  algorithm for strongly-convex functions in which the learning rate decays according to $\eta_t = 1/\sum_{\tau=1}^t H_\tau$, where $H_\tau$ is the strong-convexity parameter of the
 loss function at time $\tau$.
 Then we show that 
 $\text{SC-AdaNGD}_k$  is equivalent to applying $\text{SC-AdaGrad}$ to the following loss sequence: 
 $$\left\{\tf_t(x) = \frac{1}{\|g_t\|^k}g_t^\top x+\frac{H}{2\|g_t \|^k}\|x-x_t\|^2 \right\}_{t=1}^T~.$$
Applying the regret guarantees of $\text{SC-AdaGrad}$ to the above sequence implies the following to hold for any $x\in\K$:
\begin{align} \label{eq:RegretStronglycVX}
\sum_{t=1}^T \tf_t(x_t)-\sum_{t=1}^T\tf_t(x) 
\le 
\frac{1}{2H}\sum_{t=1}^T \frac{\| g_t\|^{-2(k-1)}}{\sum_{\tau=1}^t \|g_\tau \|^{-k}}~.
\end{align}

The lemma then follows by the
   the definition of $\bar{x}_T$ together with Jensen's inequality.
\end{proof}
For $k=0$, $\text{SC-AdaNGD}$ becomes the standard GD algorithm which uses learning  rate of $\eta_t={1}/{Ht}$.
Next we focus on the cases where $k=1,2$.
\subsection{$\text{SC-AdaNGD}_1$}
Here we show that $\text{SC-AdaNGD}_1$ enjoys an $\tO(1/T)$ rate for strongly-convex convex objectives, and a faster rate of $\tO(1/T^2)$ assuming that the objective is also  smooth. 
We emphasize that the same algorithm enjoys these rates simultaneously, without any prior knowledge of the smoothness or of the gradient norms.
%From Algorithm~\ref{algorithm:AdaNGD} it can be noted that for $k=1$ the learning rate becomes independent of the gradients, i.e. $\eta_t = D/\sqrt{2t}$,
%the update is made according to the direction of the gradients, and the weighting is inversely proportional to the norm of the gradients. 
 The following theorem 
 establishes the guarantees of $\text{SC-AdaNGD}_1$  (see  proof in Appendix~\ref{app:scadangd1}),
,
 %%%SC-AdaNGD_1 lem
%%%%%%%%%%%%%%%%%%%%%%%%%%%%
%%%%%%%%%%%%%%%%%%%%%%%%%%%%
 \begin{theorem}\label{thm:SC-AdaNGDnonsmooth}
Let  $k=1$, and $\K$ be a convex set. Let $f$ be a $G$-Lipschitz and $H$-strongly-convex function; 
Also let  $\bar{x}_T$ be the outputs of  $\text{SC-AdaNGD}_1$ (Alg.~\ref{algorithm:SC-AdaNGD}), then the following holds:
\begin{align*}
f( \bar{x}_T)- \min_{x\in\K}f(x) 
&\le 
\frac{G\left(1+\log\left( \sum_{t=1}^T \frac{G}{\|g_t \|} \right)\right)}{2H \sum_{t=1}^T \frac{1}{\|g_t \|}} \\
&\le
\frac{G^2(1+\log T)}{2H T}
~.
\end{align*}
Moreover, if $f$ is also $\beta$-smooth and the global minimum $x^* = \arg\min_{x\in\reals^n}f(x)$ belongs to $\K$, then,
\begin{align*}
f( \bar{x}_T)- \min_{x\in\K}f(x) 
%&\le
%\frac{G\left(1+\log\left( \sum_{t=1}^T \frac{G}{\|g_t \|} \right)\right)}{2H \sum_{t=1}^T \frac{1}{\|g_t \|}}\\
& \le
  \frac{(\beta/H)G^2\left(1+ \log T\right)^2}{H T^2}~.
\end{align*}
\end{theorem}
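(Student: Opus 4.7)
The plan is to start from Lemma~\ref{lem:SC-AdaNGDnonsmooth2} specialized to $k=1$. Writing $A_t := \sum_{\tau=1}^t 1/\|g_\tau\|$, the lemma yields
\[
f(\bar{x}_T) - \min_{x\in\K} f(x) \;\le\; \frac{1}{2H A_T}\sum_{t=1}^T \frac{1}{A_t},
\]
so the task reduces to estimating the harmonic-type sum $\sum_t 1/A_t$. I would use the standard telescoping bound $a_t/A_t \le \log(A_t/A_{t-1})$ for $t \ge 2$ (with the $t=1$ term equal to $1$) to get $\sum_t (A_t-A_{t-1})/A_t \le 1+\log(A_T/A_1)$, then pull out the factor $\|g_t\| \le G$ in $1/A_t = \|g_t\|\cdot (A_t-A_{t-1})/A_t$, obtaining $\sum_t 1/A_t \le G\bigl(1+\log(GA_T)\bigr)$ after noting $A_1 = 1/\|g_1\| \ge 1/G$. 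This produces the first, data-dependent form of the theorem. The worst-case form follows by noting that $(1+\log(GA))/A$ is non-increasing for $GA \ge 1$ and plugging in the trivial lower bound $A_T \ge T/G$.

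For the smooth case, the idea is to obtain a much better lower bound on $A_T$ by exploiting smoothness. I would revisit the proof of Lemma~\ref{lem:SC-AdaNGDnonsmooth2}: the SC-AdaGrad regret on the auxiliary losses $\tilde f_t$, combined with strong convexity of $f$ (which implies $(f(x_t)-f^*)/\|g_t\| \le \tilde f_t(x_t)-\tilde f_t(x^*)$), yields the intermediate inequality
\[
\sum_{t=1}^T \frac{f(x_t) - f^*}{\|g_t\|} \;\le\; \frac{1}{2H}\sum_{t=1}^T \frac{1}{A_t} \;\le\; \frac{G\bigl(1+\log(GA_T)\bigr)}{2H}.
\]
Smoothness in the form $\|g_t\|^2 \le 2\beta(f(x_t)-f^*)$ rewrites as $\|g_t\|/(2\beta) \le (f(x_t)-f^*)/\|g_t\|$, which on summation gives $\sum_t \|g_t\| \le \beta G\bigl(1+\log(GA_T)\bigr)/H$. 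The AM--HM inequality $\bigl(\sum_t \|g_t\|\bigr)\bigl(\sum_t 1/\|g_t\|\bigr) \ge T^2$ then implies
\[
A_T \;\ge\; \frac{T^2 H}{\beta G\bigl(1+\log(GA_T)\bigr)}.
\]

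The main obstacle will be unwinding this implicit inequality in $A_T$. I would handle it by setting $L := 1+\log(GA_T)$; the inequality $A_T L \ge T^2 H/(\beta G)$, together with the defining relation of $L$, forces $L = O(1+\log T)$, which one sees by substituting $A_T \approx T^2H/(\beta G L)$ back into $L = 1+\log(GA_T)$ and solving self-consistently (yielding $L \lesssim 2\log T + \log(H/\beta) + O(\log\log T)$). Plugging the resulting lower bound on $A_T$ back into the general bound $GL/(2HA_T)$ gives
\[
f(\bar x_T) - f^* \;\le\; \frac{GL}{2HA_T} \;\le\; \frac{\beta G^2 L^2}{2H^2 T^2} \;=\; \frac{(\beta/H)G^2 L^2}{2HT^2},
\]
which with $L = O(1+\log T)$ matches the stated $(\beta/H)G^2(1+\log T)^2/(HT^2)$ rate up to absorbable constants.
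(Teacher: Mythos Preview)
Your proposal is correct and follows essentially the same route as the paper: specialize Lemma~\ref{lem:SC-AdaNGDnonsmooth2} to $k=1$, bound $\sum_t 1/A_t$ via the telescoping log inequality (the paper packages this as Lemma~\ref{lem:Log_sum} applied to $a_t=G/\|g_t\|$), and in the smooth case combine $\|g_t\|^2\le 2\beta(f(x_t)-f^*)$ with the SC-AdaGrad regret bound and AM--HM to lower-bound $A_T$. You are in fact more careful than the paper in one spot: the paper's proof jumps directly from the implicit inequality $A_T\ge T^2/\bigl((\beta/H)G(1+\log(GA_T))\bigr)$ to the final bound with $(1+\log T)^2$ without explicitly unwinding the self-reference, whereas you correctly flag and handle this step.
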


\subsection{$\text{SC-AdaNGD}_2$}
Here we show that $\text{SC-AdaNGD}_2$ enjoys an $\tO(1/T)$ rate for strongly-convex convex objectives, and a faster rate of $\tO(\exp(-\gamma T))$ assuming that the objective is also smooth. 
We emphasize that the same algorithm enjoys these rates simultaneously, without any prior knowledge of the smoothness or of the gradient norms.
In the case where $k=2$ the guarantee of SC-AdaNGD is as follows  (see  proof in Appendix~\ref{app:scadangd2}),
,
%%%SC-AdaNGD_2 lem
%%%%%%%%%%%%%%%%%%%%%%%%%%%%
%%%%%%%%%%%%%%%%%%%%%%%%%%%%
\begin{theorem}\label{thm:SC-AdaNGD2}
Let  $k=2$, $\K$ be a convex set,  and $f$ be a $G$-Lipschitz and $H$-strongly-convex function; 
Also let  $\bar{x}_T$ be the outputs of  $\text{SC-AdaNGD}_2$ (Alg.~\ref{algorithm:SC-AdaNGD}), then the following holds:
\begin{align*}
f( \bar{x}_T)- \min_{x\in\K}f(x) 
&\le
 \frac{1+\log(G^2 \sum_{t=1}^T\|g_t\|^{-2})}{2H\sum_{t=1}^T \|g_t\|^{-2}} \\
 &\le
\frac{G^2(1+\log T)}{2H T}~.
\end{align*}
Moreover, if $f$ is also $\beta$-smooth and the global minimum $x^* = \arg\min_{x\in\reals^n}f(x)$ belongs to $\K$, then,
\begin{align*}
f( \bar{x}_T)- \min_{x\in\K}f(x) 
%&\le
% \frac{1+\log(G^2 \sum_{t=1}^T\|g_t\|^{-2})}{2H\sum_{t=1}^T \|g_t\|^{-2}} \\
 &\le
\frac{3G^2}{2H }e^{-\frac{H}{\beta}T}\left(1+\frac{H}{\beta}T\right)~.
\end{align*}
\end{theorem}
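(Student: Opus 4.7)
My plan is to specialize Lemma~\ref{lem:SC-AdaNGDnonsmooth2} to $k=2$, manipulate the resulting harmonic and logarithmic sums into closed form, and for the smooth bound combine the intermediate regret inequality with smoothness-based lower bounds on $\delta_t := f(x_t) - \min_{x\in\K}f(x)$ to force exponential growth of $S_T := \sum_{t=1}^T \|g_t\|^{-2}$.

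For the data-dependent bound, I would write $a_t = \|g_t\|^{-2}$ and $S_t = \sum_{\tau\leq t}a_\tau$, so that Lemma~\ref{lem:SC-AdaNGDnonsmooth2} at $k=2$ reads $f(\bar{x}_T) - \min_{x\in\K}f(x) \leq (2HS_T)^{-1}\sum_{t=1}^T a_t/S_t$. The standard telescoping estimate $a_t/S_t \leq \log(S_t/S_{t-1})$ for $t\geq 2$ yields $\sum_t a_t/S_t \leq 1 + \log(S_T/a_1)$, and combined with $a_1 \geq 1/G^2$ this delivers the first bound. For the $G^2(1+\log T)/(2HT)$ form, $\|g_t\|\leq G$ gives $G^2 S_T \geq T \geq 1$; since the map $u\mapsto (1+\log u)/u$ is decreasing on $[1,\infty)$, rewriting $(1+\log(G^2 S_T))/(2HS_T) = (G^2/(2H))\cdot(1+\log u)/u$ with $u = G^2 S_T \geq T$ produces the claim.

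For the smooth case the crucial step is to revisit the proof of Lemma~\ref{lem:SC-AdaNGDnonsmooth2} and peel back the Jensen step. The regret inequality~\eqref{eq:RegretStronglycVX} applied with $k=2$ gives $\sum_t (\tf_t(x_t) - \tf_t(x^*)) \leq (1+\log(G^2 S_T))/(2H)$; moreover, $\tf_t(x_t) - \tf_t(x^*) = \|g_t\|^{-2}\bigl[g_t^\top(x_t - x^*) - (H/2)\|x_t - x^*\|^2\bigr] \geq \delta_t/\|g_t\|^2$ by convexity together with $H$-strong convexity of $f$, so $\sum_t a_t \delta_t \leq (1+\log(G^2 S_T))/(2H)$. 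Now $\beta$-smoothness with $x^*\in\K$, via the usual descent estimate at $y = x_t - \beta^{-1}g_t$, forces $\delta_t \geq \|g_t\|^2/(2\beta)$ and hence $a_t \delta_t \geq 1/(2\beta)$ for every $t$. Summing over $t$ gives $T/(2\beta) \leq (1+\log(G^2 S_T))/(2H)$, which rearranges to the exponential lower bound $G^2 S_T \geq e^{HT/\beta - 1}$.

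It then remains to plug this back into the data-dependent estimate $(G^2/(2H))\cdot(1+\log u)/u$ with $u = G^2 S_T$. When $HT/\beta \geq 1$, $u$ lies in the decreasing regime $u \geq 1$, so $(1+\log u)/u \leq (HT/\beta)e^{1-HT/\beta}$, and the elementary inequality $ex \leq 3(1+x)$ for $x \geq 0$ (valid since $e < 3$) produces the claimed $(3G^2/(2H))(1+HT/\beta)e^{-HT/\beta}$. When $HT/\beta < 1$, the first part of the theorem already yields $f(\bar{x}_T) - \min_{x\in\K}f(x) \leq G^2/(2H)$ (since $(1+\log u)/u \leq 1$ on $[1,\infty)$), and this is dominated by the claimed bound because $3(1+x)e^{-x} \geq 3/e > 1$ for $x \in [0,1]$. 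The only genuinely subtle step is the coupling of the surrogate regret with smoothness that bootstraps the exponential lower bound on $S_T$; everything else is logarithm/monotonicity bookkeeping.
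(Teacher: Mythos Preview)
Your proposal is correct and follows essentially the same route as the paper's proof: specialize Lemma~\ref{lem:SC-AdaNGDnonsmooth2} to $k=2$, bound $\sum_t a_t/S_t$ by $1+\log(G^2 S_T)$ (the paper packages this as Lemma~\ref{lem:Log_sum}, you derive it inline via $a_t/S_t\le\log(S_t/S_{t-1})$), and for the smooth case feed the smoothness inequality $\|g_t\|^2\le 2\beta\delta_t$ back into the regret bound~\eqref{eq:RegretStronglycVX} to force $G^2 S_T\ge e^{HT/\beta-1}$ (the paper states this as $G^2 S_T\ge\tfrac13 e^{HT/\beta}$, the same thing up to $e^{-1}>1/3$), then apply monotonicity of $u\mapsto(1+\log u)/u$. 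The only difference is cosmetic: you treat the regime $HT/\beta<1$ explicitly, whereas the paper simply dismisses it as ``not too interesting.''
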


\textbf{Intuition:} Recall that for strongly-convex objectives the appropriate GD algorithm utilizes two very extreme learning rates for the general/smooth settings. While for the first setting the learning rate is decaying, $\eta_t \propto 1/t$,  for the smooth case  it is constant, $\eta_t=1/\beta$. A possible explanation to the universality of  $\text{SCAdaNGD}_2$ is that it implicitly interpolate between these rates. Indeed the update rule of our algorithm can be written as follows,
$$x_{t+1} = x_t - \frac{1}{H}\frac{\|g_t\|^{-2}}{\sum_{\tau=1}^t \|g_\tau\|^{-2}} g_t~. $$
Thus, ignoring the hindsight weighting, $\text{SCAdaNGD}_2$ is equivalent to GD  with an adaptive learning rate 
$\tilde{\eta}_t: = {\|g_t\|^{-2}}/{H\sum_{\tau=1}^t \|g_\tau\|^{-2}} $.
Now, when all gradient norms are of the same magnitude, then $\tilde{\eta}_t \propto 1/t$, which boils down to the standard GD for strongly-convex objectives. On the other hand, assume that the gradients are exponentially decaying, i.e., that $\|g_t\|\propto q^t$  
for some $q<1$. In this case $\tilde{\eta}_t$ is approximately constant. We believe that the latter applies for strongly-convex and smooth objectives. Nevertheless, note that this intuition is still unsatisfactory since it applies to any $k>0$ and does not explain why $k=2$ is unique (and maybe this is not the case).

%%%Stochastic
%%%%%%%%%%%%%%%%%%%%%%%%%%%%%%%%%%%%%%%%%%%%%%%%%%%%%%%%%%%%%%
%%%%%%%%%%%%%%%%%%%%%%%%%%%%%%%%%%%%%%%%%%%%%%%%%%%%%%%%%%%%%%
%%%%%%%%%%%%%%%%%%%%%%%%%%%%%%%%%%%%%%%%%%%%%%%%%%%%%%%%%%%%%%
%%%%%%%%%%%%%%%%%%%%%%%%%%%%%%%%%%%%%%%%%%%%%%%%%%%%%%%%%%%%%%%%%%%%%%%%%%%%%%%%%%%%%%%%%%%%%%%%%%%%%%%%%%%%%%%%%%%%%%%%%%%%%%%%%%%%%%%%%%%%%%%%%%%%%%%%%%%%%%%%%%%%%%%%%%%%%%%%%%%%%%%%%%
%\newpage 
%\kl{Say something about laziness.., and relevance for the distributed setting... how about strongly-convex and smooth losses? can we say something there?}\\
%\kl{Distinguish between $T$ and $S$, emphasize this!!!}
%%Moreover, in the realizable case of stochastic optimization with square loss functions, we devise an adaptation of 
%%$\text{AdaNGD}_1$ which achieves a runtime of $dcvdsv$. To  the best of our knowledge is the is bla bla bla.
%\vspace{-5pt}
%and it is known that these rates hold both in expectation and with high-probability.

\section{Adaptive NGD for Stochastic Optimization} \label{sec:Stochastic}
Here we show that using data-dependent minibatch sizes, we can adapt our (SC-)$\text{AdaNGD}_2$ algorithms (Algs.~\ref{algorithm:AdaNGD},~\ref{algorithm:SC-AdaNGD} with $k=2$) to the stochastic setting, and achieve the well know convergence rates for the convex/strongly-convex settings.  Next we introduce the stochastic optimization setting,  and then we present and discuss our Lazy SGD algorithm.

\paragraph{Setup:}
We consider the problem of minimizing a convex/strongly-convex function $f:\K \mapsto\reals$, where $\K\in \reals^d$ is a convex set.
We assume that optimization lasts for $T$ rounds; on each round $t=1,\ldots,T$, we may query a point $x_t\in\K$, and receive a \emph{feedback}.
After the last round,  we choose $\bar{x}_T\in\K$, and our performance measure is  the expected excess loss, defined as,
$$\E[f(\bar{x}_T)]- \min_{x\in\K}f(x)~.$$
Here we assume that our feedback is a first order noisy oracle $\G:\K \mapsto \reals^d$ such that upon   querying $\G$ with a point $x_t\in\K$, we receive a bounded 
 and unbiased gradient estimate, $\G(x_t)$, such  $\E[\G(x_t)\vert x_t] = \nabla f(x_t)$; $\|\G(x_t)\|\leq G$. 
We also assume that the  that  the internal coin tosses (randomizations) of the oracle are independent. It is well known that variants of Stochastic Gradient Descent (SGD) are ensured to output an estimate $\bar{x}_T$ such that the excess loss is bounded by $O(1/\sqrt{T})/O(1/T)$ for the setups of convex/strongly-convex stochastic optimization,~\cite{nemirovskii1983problem}, \cite{hazan2007logarithmic}.\\

\textbf{Notation:}
 In this section we make a clear distinction between the number of queries to the gradient oracle, denoted henceforth by $T$; and between the number of iterations in the algorithm, denoted henceforth by $S$. We care about the dependence of the excess loss in $T$.

%%%%%%LazySGD%%%%%%%%%%%%%%%%%%
%%%%%%%%%%%%%%%%%%%%%%%%%%%%%
%%%%%%%%%%%%%%%%%%%%%%%%%%%%%
\begin{algorithm}[t]
\caption{Lazy Stochastic Gradient Descent (LazySGD) }
\label{algorithm:SAdaNGD2_new}
\begin{algorithmic}
\STATE \textbf{Input}: \#Oracle Queries $T$, $x_1\in \reals^d$,  set $\K$,   $\eta_0$, $p$
\STATE {Set}: $t=0,\;  s=0$
\WHILE{$t\leq  T$ }
\STATE {Update:} $s=s+1$
\STATE Set $\G = \text{GradOracle}(x_s)$, i.e., $\G$ generates i.i.d. noisy samples of $\nabla f(x_s)$
\STATE {Get}:   $(\tg_s,n_s) = \text{AE}(\G, T-t)$ %$(\tg_s,n_s) = \text{AE}(\G, T-t)$ 
\tiny{           \text{             \% Adaptive Minibatch }}
\normalsize{ \text{}}
\STATE {Update:} $t=t+n_s$
\STATE {Calculate:} $\hat{g}_s = n_s \tg_s$
\STATE {Set}: $\eta_s=\eta_0/t^p $
\STATE {Update:}
$x_{s+1}= \Pi_{\K}\left( x_{s}-\eta_s \hat{g}_{s}\right)$
\ENDWHILE
\STATE \textbf{Return}:  
$ \bar{x}_T= \sum_{i=1}^s\frac{n_i}{T}x_i~.$ 
\tiny{           \text{             (Note that $\sum_{i=1}^s n_i = T$)}}
\end{algorithmic}
\end{algorithm}
%%%AE
%%%%%%%%%%%%%%%%%%%%%%%%%%%%%%%%%%%%%%%%%%%
%%%%%%%%%%%%%%%%%%%%%%%%%%%%%%%%%%%%%%%%%%%
%%%%%%%%%%%%%%%%%%%%%%%%%%%%%%%%%%%%%%%%%%%
\begin{algorithm}[t]
\caption{ Adaptive Estimate (AE)}
\label{algorithm:AE_new}
\begin{algorithmic}
\STATE \textbf{Input}: random vectors generator $\G$, sample budget $T_{\max}$, sample factor $m_0$
\STATE {Set}: $i =0, N=0$, $\tg_0=0$
\WHILE{   $N < T_{\max}$ } 
\STATE Take  $\tau_i = \min\{2^i,T_{\max}-N\}$ samples from $\G$%; let $\tg$ be their average
\STATE Set $N \gets N+\tau_i$ 
\STATE Update: 
$$ \tg_N \gets \text{Average of N samples received so far from } \G$$ 
%$\tg_N \gets \frac{n-1}{2n-1} \tg^{(n-1)} + \frac{n}{2n-1} \tg^{(n)} $
%\STATE Update estimate: 
%$$\tg^{(n)} \gets \frac{n-1}{2n-1} \tg^{(n-1)} + \frac{n}{2n-1} \tg^{(n)} $$
\STATE \textbf{If} $\|\tg_N\| > 3  m_0/\sqrt{N}$ \textbf{then return} $(\tg_N, N)$
\STATE Update $i\gets i+1$ 
\ENDWHILE
%\STATE  Take  $(\tau - 2^{i}+1)$ samples from $\G$; let $\tg^{(\text{final})}$ be their average
%\STATE  Update $\tg = \frac{\tau - 2^{i}+1}{\tau - 2^{i}+2^{i-1}+1}\tg^{(\text{final})}   + \frac{2^{i-1}}{\tau - 2^{i}+2^{i-1}+1}\tg^{(i)}$
%%%\STATE Take $n$ samples from $\G$, let $\tg$ be their average
\STATE \textbf{Return}:  
$ (\tg_N, N)$
\end{algorithmic}
\end{algorithm}

\subsection{Lazy Stochastic Gradient Descent }~\label{sec:Stochastic Lazy SGD}
\textbf{Data Dependent Minibatch sizes:} 
The Lazy SGD (Alg.~\ref{algorithm:SAdaNGD2_new}) algorithm that we present in this section, uses a minibatch size that changes in between query points. Given a query point $x_s$, Lazy SGD invokes the noisy gradient oracle $\tO(1/\|g_s\|^2)$ times, where $g_s: =\nabla f(x_s)$~\footnote{Note that the  gradient norm, $\|g_s\|$, is unknown to the algorithm. Nevertheless it is estimated on the fly.}. Thus, in contrast to SGD which utilizes a fixed number of oracle calls per query point, our algorithm tends to stall in points with smaller gradients,  hence the name Lazy SGD.

Here we give some intuition regarding our adaptive minibatch size rule:
Consider the stochastic optimization setting.
However, imagine that instead of the noisy gradient oracle $\G$,  we may access an improved (imaginary) oracle which provides us with  unbiased  estimates, $\tg(x)$, that are accurate up to some \emph{multiplicative factor}, e.g.,
$$ \E[\tg(x)] = \nabla f(x), \; \text{and } \frac{1}{2}\|\nabla f(x) \| \leq \|\tg(x)\| \leq 2\|\nabla f(x)\|~.$$
Then intuitively we could have used these estimates instead of the exact normalized gradients
inside our (SC-)$\text{AdaNGD}_2$ algorithms  (Algs.~\ref{algorithm:AdaNGD},~\ref{algorithm:SC-AdaNGD} with $k=2$), and 
  still get similar (in expectation) data dependent bounds. 
Quite nicely, we may use our original  noisy oracle $\G$ to generate estimates from this imaginary oracle. This can be done by invoking $\G$  for $\tO(1/\|g_s\|^2)$ times at each query point.
Using this minibatch rule, the total number of calls to $\G$ (along all iterations) is equal to $T=\sum_{s=1}^S 1/\|g_s\|^2$. Plugging this into the data dependent bounds of $\text{(SC-)AdaNGD}_2$ (Thms.~\ref{thm:AdaNGD2}, \ref{thm:SC-AdaNGD2}), we get the well known 
$\tO(1/\sqrt{T})$/$\tO(1/T)$ rates for the stochastic convex settings.

\paragraph{The imaginary oracle:} 
The construction of the imaginary oracle  from the original oracle appears in Algorithm~\ref{algorithm:AE_new} (AE procedure) . 
It receives as an input,  $\G$, a generator of independent random vectors with an (unknown) expected  value $g\in\reals^d$.
The algorithm outputs two variables: $N$ which is an estimate of $1/\|g\|^2$, and $\tg_N$ an average of $N$ random vectors from $\G$. 
Thus, it is natural to think of $N\tg_N$ as an estimate for $g/\|g\|^2$. Moreover, it can be shown that $E[N(\tg_N- g)]=0$. Thus in a sense we receive an unbiased estimate.
The guarantees of Algorithm~\ref{algorithm:AE_new}
 appear in the following lemma  (see  proof in Appendix~\ref{app:AE}),
%The following lemma formalizes the above intuition regarding the construction of the imaginary oracle from the oracle at hand.
\begin{lemma}[Informal] \label{lem:AE_guaranteeNewInformal}
Let $T_{\max}\geq 1,\delta\in(0,1)$.
Suppose an oracle $\G:\K \mapsto \reals^d$ that generates $G$-bounded i.i.d. random vectors with an (unknown) expected value $g \in \reals^d$. 
Then  w.p.$\geq 1-\delta$, invoking  AE (Algorithm~\ref{algorithm:AE_new}), with 
$m_0 = \Theta(G\log(1/\delta))$, it is ensured that:
$$N =\Theta(\min\{ m_0/\|g\|^2, T_{\max} \} ),\; \text{and  }\; E[N(\tg_N- g)]=0~.$$
\end{lemma}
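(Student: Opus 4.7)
The plan is to split the argument into two essentially independent pieces: a deterministic martingale calculation that yields the unbiasedness identity $\E[N(\tg_N-g)]=0$, and a high-probability concentration argument across the doubling checkpoints that pins down $N$ to the correct order. I would establish the former via Optional Stopping and the latter via a vector Hoeffding/Azuma bound combined with a union bound over the $O(\log T_{\max})$ inspection times.

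For unbiasedness, let $V_1,V_2,\ldots$ be the i.i.d.\ samples AE draws from $\G$ and let $\F_k$ be their natural filtration. Since $N_i\tg_{N_i}=\sum_{j=1}^{N_i}V_j$, we have $N(\tg_N-g)=M_N$ for the martingale $M_k:=\sum_{j=1}^k(V_j-g)$. The stopping rule inspects only averages that have already been formed, so $N$ is an $\{\F_k\}$-stopping time, and $N\le T_{\max}$ almost surely. Optional Stopping (applicable for a bounded stopping time with bounded increments $\|V_j-g\|\le 2G$) gives $\E[M_N]=0$, i.e., $\E[N(\tg_N-g)]=0$. This identity is exact and needs no probability caveat.

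For the size estimate, I would control $\|\tg_{N_i}-g\|$ simultaneously at all $L=\lceil\log_2 T_{\max}\rceil+1$ checkpoints. A standard vector Hoeffding/Azuma inequality applied to the bounded-increment martingale $M_k$ at each checkpoint, together with a union bound, gives that with probability at least $1-\delta$,
\begin{equation*}
\|\tg_{N_i}-g\|\le c_0\, G\sqrt{\log(L/\delta)/N_i}\qquad\text{for every }i\le L.
\end{equation*}
Now choose $m_0=\Theta(G\log(1/\delta))$ large enough (absorbing the $\log L$ into the $\Theta$) that $3m_0/\sqrt{N}$ exceeds the deviation term $c_0 G\sqrt{\log(L/\delta)/N}$ by a fixed multiplicative gap. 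On this good event the two-sided bound follows from the triangle inequality. \emph{Upper bound:} once $N_i\ge C m_0^2/\|g\|^2$ for a suitable $C$, $\|\tg_{N_i}\|\ge\|g\|-\|\tg_{N_i}-g\|>3m_0/\sqrt{N_i}$ and AE halts, so the doubling schedule overshoots by at most a factor of two. \emph{Lower bound:} at the halting index, $\|g\|\ge\|\tg_N\|-\|\tg_N-g\|>3m_0/\sqrt{N}-c_0 G\sqrt{\log(L/\delta)/N}\ge c_1 m_0/\sqrt{N}$, which rearranges to $N\ge c_2 m_0^2/\|g\|^2$. The $T_{\max}$ envelope is hard-coded into AE, producing the stated $\min$.

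The principal obstacle will be calibrating the constants so that the deviation and threshold separate by a fixed multiplicative gap uniformly across all doubling checkpoints while keeping $m_0$ at the advertised scale $\Theta(G\log(1/\delta))$. A minor bookkeeping point concerns the last, possibly truncated, block $\tau_i=T_{\max}-N$: it does not break the martingale argument (since $N$ remains bounded by $T_{\max}$) and enters the concentration step only through the union bound already accounted for.
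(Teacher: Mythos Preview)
Your proposal is correct and follows essentially the same route as the paper: the unbiasedness via Optional Stopping on the bounded stopping time $N\le T_{\max}$, and the size estimate via a vector Hoeffding bound plus union bound over the $O(\log T_{\max})$ checkpoints, followed by triangle-inequality comparisons against the threshold $3m_0/\sqrt{N}$ in both directions. The paper states the two implications as (AE1)/(AE2) with explicit constants ($m_0=6G(1+\sqrt{\log(\delta^{-1}(1+\log_2 T_{\max}))})$, yielding $m_0^2/\|g\|^2\le N\le 32m_0^2/\|g\|^2$), whereas you phrase the lower bound contrapositively from the halting condition, but the logic is identical.
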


\paragraph{Lazy SGD:} 
Now, plugging the output of the AE algorithm into our offline algorithms $\text{(SC-)AdaNGD}_2$, we get their stochastic variants which appears in Algorithm~\ref{algorithm:SAdaNGD2_new} (Lazy SGD).
%This algorithm resembles SGD with the difference that  it stalls in query points with small gradients and we therefore denote it by the name LazySGD.  
This algorithm is equivalent to the offline version of $\text{(SC-)AdaNGD}_2$, with the difference that we use $n_s$ instead of $1/\|\nabla f(x_s)\|^2$ and $n_s\tg_s$ instead of $\nabla f(x_s)/\| \nabla f(x_s)\|^2$.

Let $T$ be a bound on the \emph{total number of queries} to the the first order oracle $\G$, and $\delta$ be the confidence parameter used to set $m_0$ in the AE procedure. The next lemmas present the guarantees of LazySGD for the stochastic settings
\footnote{We provide in expectation bounds. Note that we believe that high-probability bounds could be established. We leave this for future work.}  (see  proofs in Appendix~\ref{app:lazysgd1}, and~\ref{app:lazysgd2}),

\begin{lemma} \label{lem:LazySGD_general_expectation}
Let $\delta =O(1/T^{3/2})$, also 
let $\K$ be a convex set with diameter $D$, and $f$ be a convex function; and assume that $\| \G(x)\|\leq G$ w.p.$1$. Then using LazySGD (Algorithm~\ref{algorithm:SAdaNGD2_new}) with  $\eta_0 = D/\sqrt{2}G$, $p=1/2$, ensures that:
\begin{align*}
\E[f( \bar{x}_T)]- \min_{x\in\K}f(x) 
\le
 O\left(\frac{GD\log(T)}{\sqrt{T}} \right)~.
\end{align*}
\end{lemma}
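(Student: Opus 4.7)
The plan is to reduce the analysis of LazySGD to a standard projected SGD argument over a random number $S$ of iterations, driven by the effective gradients $\hat g_s = n_s \tg_s$, and then use the AE guarantees to control the variance term in terms of $m_0$ and the total query budget $T$.

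First I would introduce a ``good event'' $\mathcal{E}$ on which the conclusions of Lemma~\ref{lem:AE_guaranteeNewInformal} hold simultaneously for every call to AE. Since each call to AE consumes at least one oracle query, the number of iterations $S$ satisfies $S\le T$, so a union bound with $\delta = O(1/T^{3/2})$ gives $\Pr(\mathcal{E}^c)\le S\delta = O(1/\sqrt{T})$. On $\mathcal{E}$ we have, for every $s$, $n_s = \Theta(\min\{m_0^2/\|g_s\|^2, T_{\max}\})$ and $\mathbb{E}[n_s(\tg_s - g_s)\mid x_s]=0$, where $g_s := \nabla f(x_s)$. Outside $\mathcal{E}$, the trivial bound $f(\bar x_T)-\min_\K f\le GD$ contributes only an $O(GD/\sqrt{T})$ term to the expectation.

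Next, convexity and the definition $\bar x_T = \sum_s (n_s/T)\, x_s$ (with $\sum_s n_s=T$) yield $f(\bar x_T)-f(x^*)\le \frac{1}{T}\sum_s n_s\, g_s^\top(x_s-x^*)$. The key identity is that, conditional on $x_s$, $\mathbb{E}[n_s \tg_s\mid x_s] = \mathbb{E}[n_s\mid x_s]\, g_s$ (since the bias term $\mathbb{E}[n_s(\tg_s - g_s)\mid x_s]$ vanishes), and hence $\mathbb{E}[n_s g_s^\top(x_s-x^*)\mid x_s] = \mathbb{E}[\hat g_s^\top(x_s-x^*)\mid x_s]$. Taking expectations, the problem reduces to bounding $\mathbb{E}\bigl[\sum_{s=1}^S \hat g_s^\top(x_s-x^*)\bigr]$, where the LazySGD update $x_{s+1}=\Pi_\K(x_s-\eta_s \hat g_s)$ is just projected GD driven by $\hat g_s$ with step sizes $\eta_s = \eta_0/\sqrt{t_s}$ and $t_s := \sum_{i\le s} n_i$.

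Then I apply the standard projected GD identity and telescope using the monotonicity of $1/\eta_s$:
\begin{align*}
\sum_{s=1}^{S}\hat g_s^\top(x_s-x^*) \le \frac{D^2}{2\eta_S} + \frac{1}{2}\sum_{s=1}^S \eta_s \|\hat g_s\|^2
= \frac{D^2\sqrt{t_S}}{2\eta_0}+\frac{\eta_0}{2}\sum_{s=1}^S \frac{\|\hat g_s\|^2}{\sqrt{t_s}}.
\end{align*}
Since $t_S\le T$, the first term is at most $D^2\sqrt{T}/(2\eta_0)=O(GD\sqrt{T})$ with $\eta_0=D/(\sqrt{2}G)$. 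The main obstacle, and the step where the AE guarantees are really used, is the variance sum. The naive bound $\|\hat g_s\|\le n_s G$ is far too loose because $n_s$ can be huge when $\|g_s\|$ is small. Instead, I would factor $\|\hat g_s\|^2 = n_s \cdot (n_s\|\tg_s\|^2)$ and use the AE stopping rule together with the concentration on $\mathcal{E}$ to argue that $n_s\|\tg_s\|^2 = O(m_0^2)$: at termination $\|\tg_s\|\le \|g_s\|+O(m_0/\sqrt{n_s})$, and $n_s\|g_s\|^2 = O(m_0^2)$ by the AE size bound. This gives $\sum_s \eta_s \|\hat g_s\|^2 \le O(m_0^2\,\eta_0)\sum_s n_s/\sqrt{t_s}$, and the telescoping/integral bound $\sum_s n_s/\sqrt{t_s}\le 2\sqrt{t_S}\le 2\sqrt{T}$ closes the estimate at $O(m_0^2\,\eta_0\sqrt{T})$.

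Finally, combining the two terms and plugging in $\eta_0=D/(\sqrt{2}G)$ and $m_0=O(G\log T)$ (from $\delta=O(1/T^{3/2})$), the conditional-on-$\mathcal{E}$ bound on $\mathbb{E}[\sum_s \hat g_s^\top(x_s-x^*)]$ is $O(GD\sqrt{T}\,\mathrm{polylog}(T))$; dividing by $T$ and adding the $O(GD/\sqrt{T})$ contribution from $\mathcal{E}^c$ yields the claimed $O(GD\log(T)/\sqrt{T})$ rate. The two technical subtleties to be careful about are (i) that $S$ is a random stopping time depending on the algorithm's internal randomness, which forces the SGD telescoping to be carried out pathwise before taking expectations, and (ii) verifying that $n_s\|\tg_s\|^2=O(m_0^2)$ also in the saturated case $n_s=T_{\max}-t_{s-1}$, which follows because saturation can happen at most once and the final step can only add a single $O(GD)$ term to the regret.
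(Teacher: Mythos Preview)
Your approach is essentially the same as the paper's: both decompose $\sum_s n_s g_s^\top(x_s-x^*)$ into the ``surrogate regret'' term $(a)=\sum_s n_s\tg_s^\top(x_s-x^*)=\sum_s \hat g_s^\top(x_s-x^*)$ plus a bias term $(b)=\sum_s n_s(g_s-\tg_s)^\top(x_s-x^*)$; both show $\mathbb{E}[(b)]=0$ via optional stopping, bound $(a)$ pathwise by the projected-GD telescoping using $n_s\|\tg_s\|^2=O(m_0^2)$ from the AE guarantees together with $\sum_s n_s/\sqrt{t_s}\le 2\sqrt{T}$, and absorb the failure event with a crude worst-case bound and $\delta=O(T^{-3/2})$.

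Two small points of care in your write-up. First, the identity $\mathbb{E}[n_s(\tg_s-g_s)\mid x_s]=0$ holds \emph{unconditionally}, not conditionally on $\mathcal{E}$; conditioning on $\mathcal{E}$ would break the martingale. So the clean order of operations (which is what the paper does) is: apply the bias cancellation first to reduce to bounding $\mathbb{E}[(a)]$, and only then split $\mathbb{E}[(a)]$ by the good event. On $\mathcal{E}^c$ the relevant crude bound is $|(a)|\le \sum_s n_s\|\tg_s\|\,D\le GDT$, giving $\mathbb{E}[(a)\mathbf{1}_{\mathcal{E}^c}]\le GDT\cdot\Pr(\mathcal{E}^c)=O(GD\sqrt{T})$; after dividing by $T$ this matches your $O(GD/\sqrt{T})$, but the quantity being bounded on $\mathcal{E}^c$ is $(a)$, not $f(\bar x_T)-f(x^*)$. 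Second, your separate handling of the saturated case is unnecessary: part~(1) of the AE lemma already gives $n_s\le 32m_0^2/\|g_s\|^2$ on $\mathcal{E}$ regardless of saturation, and the paper in fact bypasses the issue entirely by invoking part~(2), $\sqrt{n_s}\,\|\tg_s\|\le 8m_0$, which holds uniformly.
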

\begin{lemma} \label{lem:LazySGDstronglyConvex_expectation}
Let $\delta =O(1/T^2)$, also 
let $\K$ be a convex set, and $f$ be an $H$-strongly-convex convex function; and assume that $\| \G(x)\|\leq G$ w.p.$1$. Then using LazySGD (Algorithm~\ref{algorithm:SAdaNGD2_new}) with  $\eta_0 = 1/H$, $p=1$ ensures that:
\begin{align*}
\E[f( \bar{x}_T)]- \min_{x\in\K}f(x) 
\le
 O\left(\frac{G^2\log^2(T)}{HT} \right)~.
\end{align*}
\end{lemma}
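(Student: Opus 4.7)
The plan is to reduce the analysis to the deterministic bound of Theorem~\ref{thm:SC-AdaNGD2} via an online-to-batch conversion of SC-AdaGrad applied to a randomized loss sequence, and then to use the adaptive minibatch guarantees of Lemma~\ref{lem:AE_guaranteeNewInformal} to translate the $\sum_s \|g_s\|^{-2}$ appearing in that bound into a $T/m_0$ factor. First I would condition on the high-probability event $\Ecal$ that the AE procedure succeeds on every one of the $S \leq T$ iterations; taking $\delta = \Theta(1/T^2)$ inside AE (so $m_0 = \Theta(G\log T)$) and union bounding over iterations makes the complement event have probability $O(1/T)$. On $\Ecal^c$ I would just use the trivial bound $f(\bar x_T) - f^\star \leq GD$, which contributes only $O(1/T)$ to the expected excess loss and is absorbed.

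On $\Ecal$, Lemma~\ref{lem:AE_guaranteeNewInformal} provides the two properties driving the proof: the \emph{sizing} property $n_s = \Theta\bigl(m_0/\|g_s\|^2\bigr)$ (away from the budget boundary), and the \emph{unbiasedness} property $\E[\,n_s(\tg_s - g_s)\mid x_s\,] = 0$. The latter is what allows $n_s\tg_s$ to stand in for the idealized normalized gradient $g_s/\|g_s\|^2$. Mirroring the derivation of Lemma~\ref{lem:SC-AdaNGDnonsmooth2}, I would apply the regret analysis of SC-AdaGrad from Appendix~\ref{app:scadangd_General} to the randomized loss sequence
\[
\tf_s(x) \;=\; n_s\,\tg_s^{\top}x \;+\; \frac{n_s H}{2}\,\|x - x_s\|^{2},
\]
whose $\tf_s$ is $n_s H$-strongly-convex, so that the effective inverse learning rate accumulator is $H\sum_{\tau\leq s} n_\tau = H t_s$, which matches the schedule $\eta_s = 1/(Ht)$ used in Lazy SGD with $\eta_0=1/H,\; p=1$. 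Taking conditional expectations iteration-by-iteration and telescoping yields a regret bound of the same shape as \eqref{eq:RegretStronglycVX}, but with $\|g_s\|^{-2}$ replaced by $n_s$.

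Next I would convert regret into optimization error using Jensen's inequality with the weights $n_s/T$ (which sum to one since $\sum_s n_s = T$ on $\Ecal$), producing a bound of the form
\[
\E[f(\bar x_T)] - f^\star \;\lesssim\; \frac{1 + \log\bigl(G^{2}\sum_s n_s\bigr)}{2H\sum_s n_s / m_0}.
\]
Substituting $\sum_s n_s = T$ and $m_0 = \Theta(G\log T)$ then gives the claimed $O(G^{2}\log^{2}(T)/(HT))$ rate: one $\log T$ arises from the SC-AdaGrad regret bound, and the other from the size of $m_0$ (through $\delta = 1/T^{2}$). The main obstacle I anticipate is handling the boundary case where AE exhausts its remaining query budget (so $n_s = T_{\max} - t_{s-1}$ rather than $\sim m_0/\|g_s\|^{2}$): this can only happen at the final iteration, and since the budget-truncated $n_s$ is only smaller than the unconstrained one, the unbiasedness remains intact and the above upper bound continues to hold up to constants. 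A secondary technicality is the mild dependence between $n_s$ and $\tg_s$ within the same iteration, which I would handle by noting that conditional on $x_s$ the joint distribution of $(n_s, n_s\tg_s)$ comes from the same i.i.d.~stream and that only the unbiasedness of $n_s\tg_s - n_s g_s$ given $x_s$ is needed in the regret step.
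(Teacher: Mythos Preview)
Your approach is essentially the paper's. The paper also decomposes $\sum_s n_s(f(x_s)-f(x^*))$ via strong convexity into a term $(a)=\sum_s n_s(\tg_s^\top(x_s-x^*)-\tfrac{H}{2}\|x_s-x^*\|^2)$ and a term $(b)=\sum_s n_s(g_s-\tg_s)^\top(x_s-x^*)$, bounds $(a)$ by the SC-AdaGrad analysis, shows $\E[(b)]=0$ via Doob's optional stopping, and finishes with Jensen and $\sum_s n_s=T$.

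Two points where your write-up is imprecise and would need fixing.

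\emph{First}, the inequality driving the regret bound for $(a)$ is not the sizing property $n_s\asymp m_0^2/\|g_s\|^2$ (note the square on $m_0$), but part~(2) of the AE lemma, namely $n_s\|\tg_s\|^2\leq 64m_0^2$. This is what controls $\tfrac{1}{2}\sum_s \eta_s n_s^2\|\tg_s\|^2$: with $\eta_s=(H\sum_{\tau\leq s}n_\tau)^{-1}$ one gets $\tfrac{32m_0^2}{H}\sum_s \tfrac{n_s}{\sum_{\tau\leq s} n_\tau}\leq \tfrac{32m_0^2}{H}(1+\log T)$. Your displayed bound with $\sum_s n_s/m_0$ in the denominator is not the right expression; the correct one is $m_0^2(1+\log T)/(HT)$, and since the formal AE lemma has $m_0=\Theta(G\sqrt{\log(1/\delta)})$ (the informal version's $G\log(1/\delta)$ is loose), setting $\delta=O(1/T^2)$ indeed yields $O(G^2\log^2 T/(HT))$.

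\emph{Second}, the unbiasedness $\E[n_s(\tg_s-g_s)\mid x_s]=0$ holds \emph{unconditionally}, not conditional on $\Ecal$; once you condition on $\Ecal$ that conditional expectation need not vanish. The paper avoids this by treating the two terms separately: $\E[(b)]=0$ is established with no reference to $\Ecal$ at all, while $(a)$ is bounded by $O(m_0^2\log T/H)$ on an event of probability $\geq 1-\delta T$ and by a trivial $O(GDT)$ off it, so $\delta=O(1/T^2)$ makes the failure contribution negligible in expectation. Your ``condition on $\Ecal$ throughout'' framing conflates these and would have to be unwound in exactly this way.
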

Note that LazySGD uses minibatch sizes  that are adapted to the magnitude of the gradients, and still maintains the optimal 
$O(1/\sqrt{T})/O(1/T)$ rates. In contrast using a fixed minibatch size $b$ for SGD might degrade the convergence rates, yielding
 $O(\sqrt{b}/\sqrt{T})/O(b/T)$ guarantees. 
This property of LazySGD may be beneficial when    considering distributed computations (see e.g.~\cite{dekel2012optimal}).

\section{Extensions} \label{sec:Extensions}
\textbf{Acceleration:} The catalyst approach, \cite{lin2015universal}, enables to take any first order method 
that ensures linear convergence rates in the strongly-convex and smooth case and transform it into an accelerated method  obtaining $O(\exp(-\sqrt{\gamma}T))$ rate in the strongly-convex and smooth case, and $O(1/T^2)$ rate in the smooth case. In particular, this acceleration applies to our $\text{SC-AdaNGD}_2$ Algorithm. 
Unfortunately, the catalyst approach requires the smoothness parameter, and the resulting accelerated $\text{SC-AdaNGD}_2$ is no longer universal.

\textbf{Other  online adaptive schemes:}
The adaptive methods that we have presented so far lean on AdaGrad (Alg.~\ref{algorithm:AdaGrad}).
Nevertheless, we may base our methods on other online algorithms with adaptive regret guarantees, and obtain convergence rates of the form, 
$$f(\bar{x}_T) - \min_{x\in\K} f(x) \le \frac{\R^{\A}\left(g_1/\|g_1\|^k,\ldots,g_T/\|g_T\|^k \right)}{\sum_{t=1}^T \|g_t\|^{k} }~,$$
where $\R^{\A}(\theta_1,\ldots,\theta_T)$ is the regret bound of algorithm $\A$ with respect to the linear loss sequence
$\{  \theta_t^\top x\}_{t=1}^T$.
For example we can  use the very popular version of AdaGrad, which employs a separate learning rate to different directions.
Also noteworthy  is the Multiplicative Weights (MW)  online algorithm, which over the simplex ensures a regret bound of the form (see~\cite{hazan2012linear},~\cite{clarkson2012sublinear}),
$$\R^{MW} \le \sqrt{\sum_{t=1}^T \|g_t\|_{\infty}^2 \log(d)}~.$$
Using $\text{AdaNGD}_k$ with the appropriate modifications: AdaGrad$\leftrightarrow$MW, and $\ell_2\leftrightarrow \ell_\infty$, yields similar adaptive guarantees as in Theorems~\ref{thm:AdaNGDnonsmooth},~\ref{thm:AdaNGD2},  with the difference that, $D \leftrightarrow \log d$, and $\ell_2 \leftrightarrow \ell_\infty $.

\section{Experiments}\label{sec:experiments}
As a preliminary experimental investigation we compare our $\text{SC-AdaNGD}_{k}$  to GD accelerated-GD, and line-search for two strongly-convex objectives\footnote{Line-search may invoke the gradient oracle several times in each iteration. To make a fair comparison, we present performance vs. $\#$calls to the gradient oracle}.
Concretely, we compare the above methods for the following quadratic (smooth) minimization problem,
$$\min_{x\in\reals^d} R(x) ~: =~ \frac{1}{2}\sum_{i=1}^d i\cdot x_i^2~. $$
,and also for the following non-smooth problem,
 $$\min_{\|x\|\leq 1} F(x) ~: =~ \frac{1}{2}\sum_{i=1}^d i\cdot x_i^2+\|x\|_1~. $$
where $x_i$ is the $i$'th component of $x$, and $\|x\|_1$ is the $\ell_1$ norm.
Note that both $R$ and $F$ are $1$-strongly-convex, however $R$ is $d$-smooth while $F$ is non-smooth.
Also, for both $R$ and $F$ the unique global minimum is in $x=0$.
We  initialize all of the methods at the same random point, and take $d=100$.

%%%
%%%%%%%%%%%%%%%%%%%%%%%%%%%%%%%%%
%%%%%%%%%%%%%%%%%%%%%%%%%%%%%%%%%
\begin{figure*}[t]
\centering
\subfigure[]{ \label{fig:smooth}
\includegraphics[trim = 10mm 60mm 20mm 75mm, clip,
width=0.32\textwidth ]{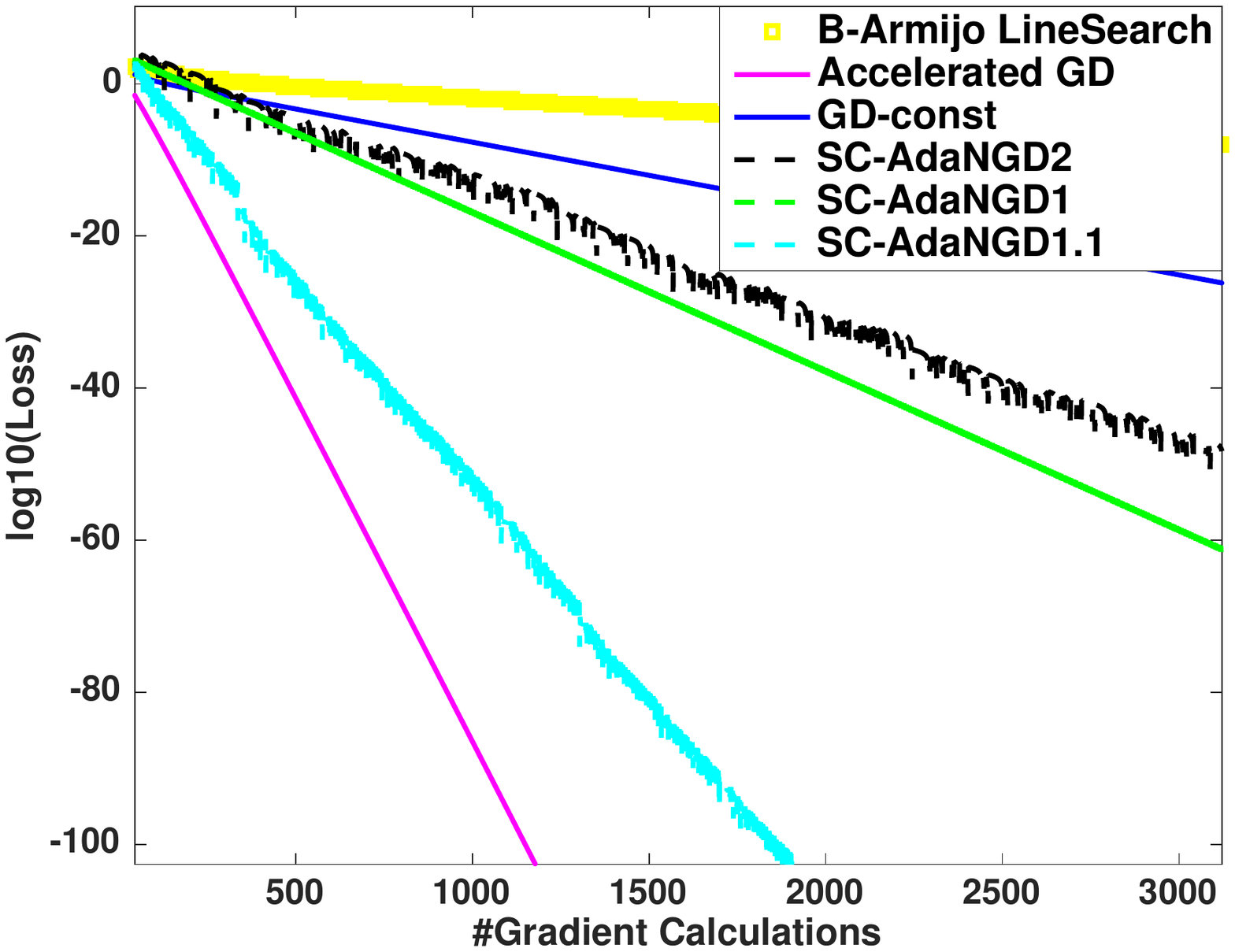}}
\subfigure[]{\label{fig:nonsmooth}
 \includegraphics[trim = 10mm 60mm 20mm 65mm, clip,
width=0.32\textwidth ]{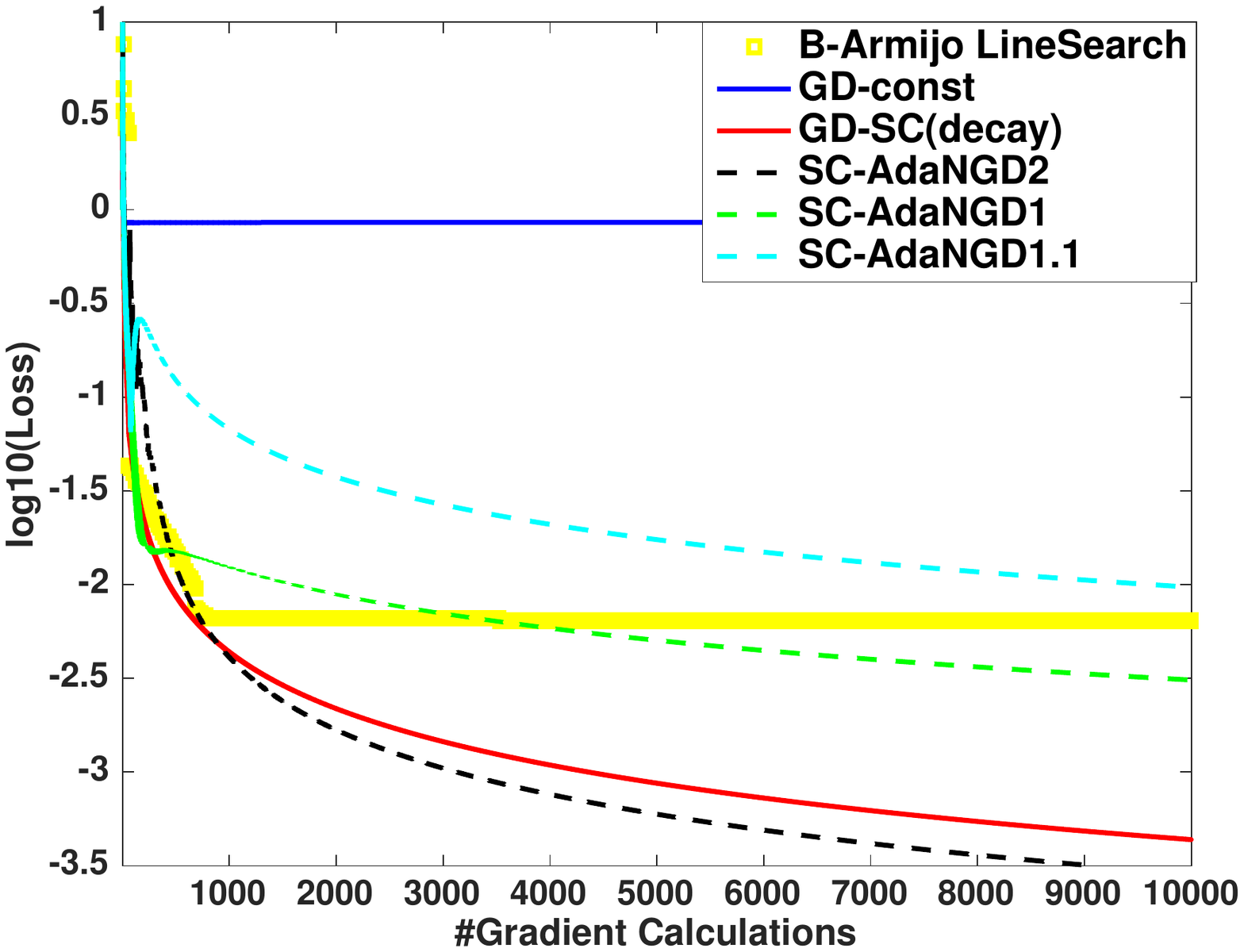}}
\subfigure[]{ \label{fig:2d}
 \includegraphics[trim = 10mm 60mm 20mm 65mm, clip,
width=0.32\textwidth ]{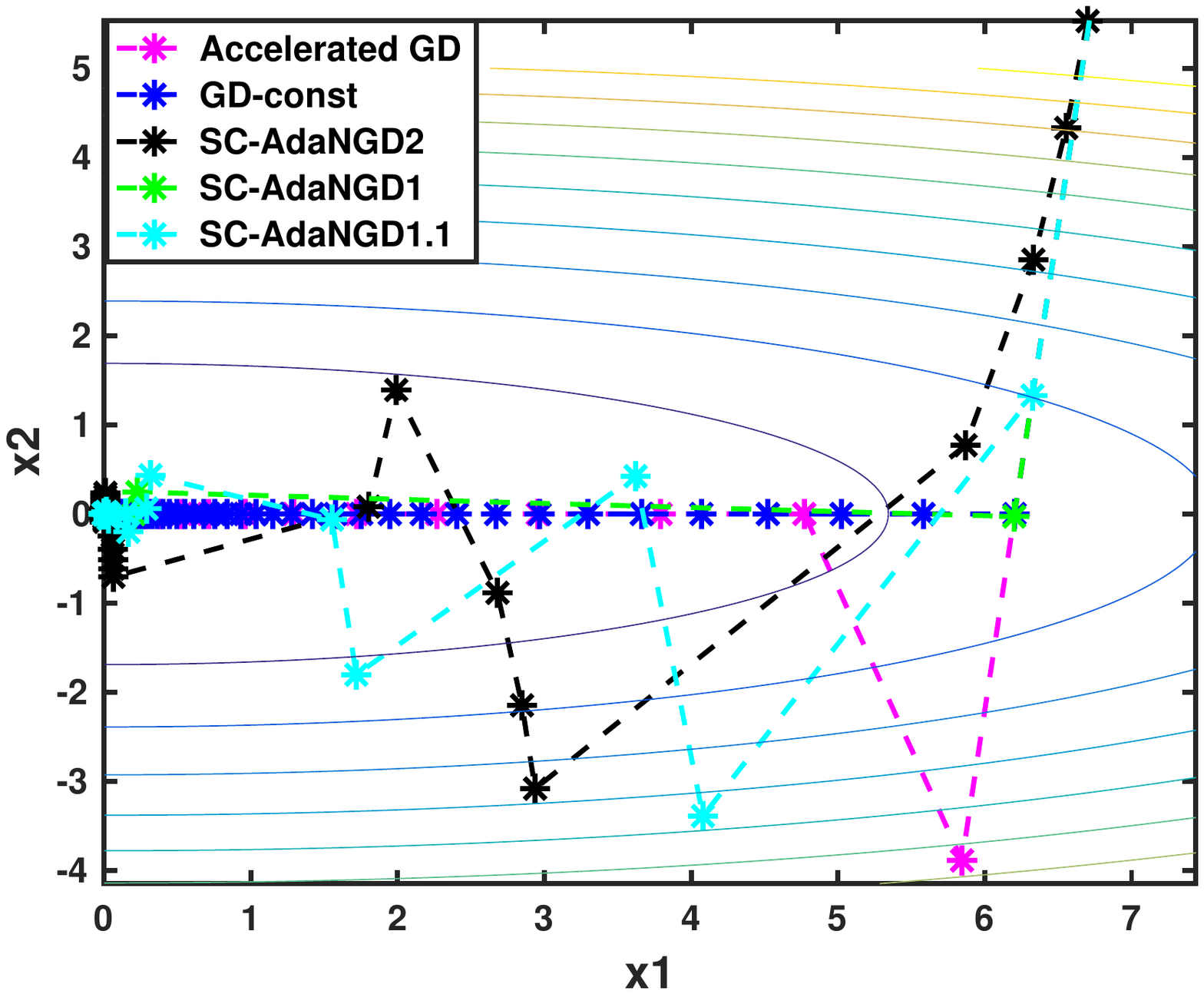}}
\caption{$\text{SC-AdaNGD}_k$ compared to  GD, accelerated-GD and line-search. Left: strongly-convex and smooth objective, $R(\cdot)$. Middle: strongly-convex and non-smooth objective, $F(\cdot)$. Right: iterates of these methods for a 2D quadratic objective, $Z(\cdot)$.} 
\label{fig:Exps}
\end{figure*}
%%%%%%%%%%%%%%%%%%%%%%%%%%%%%%%%%%
%%%%%%%%%%%%%%%%%%%%%%%%%%%%%%%%%

The results are depicted in Fig.~\ref{fig:Exps}. In Fig.~\ref{fig:smooth} we present our results for the smooth quadratic objective $R$.
We compare three $\text{SC-AdaNGD}_{k}$ variants  $k\in\{1,1.1,2 \}$, to GD which uses a constant learning rate $\eta_t = 1/\beta$ (recall $\beta=d=100$), and to Nesterov's accelerated method. While this is not surprising that the latter demonstrates the best performance, it is surprising that all $\text{SC-AdaNGD}_{k}$ variants are performing better than  GD/lines-search, and the   $k=1.1$ variant substantially outperforms GD. 
Also, in contrast to GD, $\text{SC-AdaNGD}_{k}$ are not descent methods, in the sense that the losses are not necessarily monotonically decreasing from one iteration to another.

Fig.~\ref{fig:nonsmooth} shows  the results for the non-smooth objective $F$, where we compare two $\text{SC-AdaNGD}_{k}$ variants  $k\in\{1,2 \}$, with two variants of 
GD, \textbf{(i)} const learning rate $\eta_t =1/\beta$, and \textbf{(ii)} decaying learning rate $\eta_t = 1/Ht$. We have also compared to accelerated-GD and found its performance to be similar to GD-const (and therefore omitted). As can be seen, GD with a constant learning rate is doing very poorly,  $\text{SC-AdaNGD}_{2}$ demonstrates the best performance,  and GD-SC (decay) lags behind only  by little. Note that for GD-SC (decay) we present results for a moving average over the GD iterates (which improve its performance).

The universality of $\text{SC-AdaNGD}_{k}$ for $k\in\{1,2\}$ is clearly evident from Figures~\ref{fig:smooth} ,\ref{fig:nonsmooth}.
In order to learn more about the character of $\text{SC-AdaNGD}$, we have applied the above methods to  
a simple 2D quadratic objective,
$$Z(x) = x_1^2 + 10x_2^2~.$$
The progress (iterates) of these methods is presented  in Fig.~\ref{fig:2d}.
It can be seen that GD and accelerated-GD converge quickly to the $x_1$ axis and  progress  along it towards $(0,0)$.
Conversely,  $\text{SC-AdaNGD}$ methods  progress diagonally, however take larger steps in the $x_1$ directions compared to GD and accelerated-GD.

\textbf{Robustness:}
We have also examined the robustness of $\text{SC-AdaNGD}$ compared to GD, accelerated-GD and line-search.
We applied these methods  to the quadratic objective $R$, however instead of the exact gradients we provided them with a slightly noisy and (unbiased) gradient feedback. The results when using noise perturbation magnitude of $10^{-6}$ appear in Fig.~\ref{fig:robustExp}. This behaviour persisted when we employed other noise magnitudes. 

%%%%%%%%%%%%%%%%%%%%%%%%%%%%%%%%%
%%%%%%%%%%%%%%%%%%%%%%%%%%%%%%%%%
\begin{figure*}[t]
\centering
\subfigure[]{ 
\includegraphics[trim = 10mm 60mm 20mm 75mm, clip,
width=0.4\textwidth ]{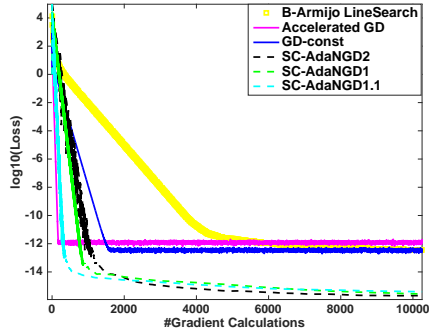}}
\caption{Robustness experiments comparing $\text{SC-AdaNGD}_k$ with  GD and accelerated-GD for the strongly-convex and smooth objective, $R(\cdot)$. Gradient oracle is perturbed with $\propto 10^{-6}$ noise magnitude.} 
\label{fig:robustExp}
\end{figure*}
%%%%%%%%%%%%%%%%%%%%%%%%%%%%%%%%%%
%%%%%%%%%%%%%%%%%%%%%%%%%%%%%%%%%

\textbf{Stochastic setting:}
We made a few experiments in the stochastic setting. While examining LazySGD,
we have found out that using the $n_s$ output of the AE procedure (Alg.~\ref{algorithm:AE_new}) is a too crude estimate for $1/\|g_s\|^2$ (due to the doubling procedure), which lead to unsatisfactory performance. Instead, we found that using $1/\|\tg_s\|^2$ is a much better approximation, that works very well in practice.

An initial experimental study  on several simple stochastic problems shows that LazySGD (with the above modification) compares with minibatch SGD, for various values of minibatch sizes. 
%As expected  LazySGD tends to increase minibatch size when approaching the optimum, and use smaller minibatch sizes elsewhere.
A more elaborate examination of LazySGD is left for future work.

%%%%Accelerated GD
%Y. Nesterov. Introductory lectures on convex optimization: A basic course.
%Kluwer Academic Publishers, 2004a.
%%%%%

\section{Discussion} \label{sec:Discussion}
We have presented a new approach which exhibits universality and new adaptive bounds in the offline convex optimization setting, and  provides a principled approach towards  minibatch size selection in the 
stochastic setting. 

Among the many questions that remain open is whether we can devise  ``accelerated"
universal methods.  Furthermore, our universality results only apply when the global minimum is inside the constraints. 
Thus, it is natural to seek for methods that ensure universality when this assumption is violated.  
Moreover, our algorithms depend on a parameter $k\in\reals$, but only the cases where $k\in\{0,1,2\}$ are well understood. Investigating a wider spectrum of $k$ values is intriguing. Lastly, it is interesting to modify and test our methods in non-convex scenarios, especially in the context of deep-learning applications.

\section*{Acknowledgement}
I would like to thank  Elad Hazan and Shai Shalev-Shwartz  for fruitful discussions during the early stages of this work.

This work was supported by the ETH Z\"urich Postdoctoral Fellowship and Marie Curie Actions for People COFUND program.

%\kl{Think if you want to mention this}\\
%In the stochastic setting, we conjecture that LazySGD  is more appropriate than SGD for distributed settings. It is appealing to
%explore in this direction from both a practical  and  theoretical perspective.

%\newpage
%\section*{Acknowledgement}
%I would like to thank Elad Hazan and  Shai Shalev-Shwartz for useful discussions during the early stages of this work.

%%%Bib
%%%%%%%%%%%%%%%%%%%%%%%%%%%%%%%%%%%%%%%%%%%%%%%%%%%%%%%%%%%%%%
%%%%%%%%%%%%%%%%%%%%%%%%%%%%%%%%%%%%%%%%%%%%%%%%%%%%%%%%%%%%%%
\bibliography{bib}
\bibliographystyle{icml2017}

%%%Appendix
%%%%%%%%%%%%%%%%%%%%%%%%%%%%%%%%%%%%%%%%%%%%%%%%%%%%%%%%%%%%%%
%%%%%%%%%%%%%%%%%%%%%%%%%%%%%%%%%%%%%%%%%%%%%%%%%%%%%%%%%%%%%%
%%%%%%%%%%%%%%%%%%%%%%%%%%%%%%%%%%%%%%%%%%%%%%%%%%%%%%%%%%%%%%
%%%%%%%%%%%%%%%%%%%%%%%%%%%%%%%%%%%%%%%%%%%%%%%%%%%%%%%%%%%%%%%%%%%%%%%%%%%%%%%%%%%%%%%%%%%%%%%%%%%%%%%%%%%%%%%%%%%%%%%%%%%%%%%%%%%%%%%%%%%%%%%%%%%%%%%%%%%%%%%%%%%%%%%%%%%%%%%%%%%%%%%%%%
\newpage
%\normalsize
\onecolumn

\appendix
\section{Proofs for Section~\ref{sec:AdaNGD_general} (AdaNGD)}
%%%AdaGrad Proof
%%%%%%%%%%%%%%%%%%%%%%%%%%%%%%%%%%%%%%%%%%%%%%%%%%%%%%%%%%%%%%
%%%%%%%%%%%%%%%%%%%%%%%%%%%%%%%%%%%%%%%%%%%%%%%%%%%%%%%%%%%%%%
\subsection{Proof of Theorem~\ref{theorem:AdaGrad} (AdaGrad)}
\begin{proof}
Let $x\in\K$ and Consider the update rule $x_{t+1}=\Pi_\K(x_t - \eta_t g_t)$. We can write:
\begin{align*}
\|x_{t+1}-x \|^2 \leq \|x_t-x \|^2 -2\eta_t g_t(x_t-x) + \eta_t^2\| g_t\|^2
\end{align*}  
Re-arranging the above we get: 
$$g_t(x_t-x)\leq \frac{1}{2\eta_t}\left(\|x_t-x \|^2-\|x_{t+1}-x \|^2\right) +\frac{\eta_t}{2}\| g_t\|^2~.$$
Combined with the convexity of $f_t$ and summing over all rounds we conclude that $\forall x\in\K$,
\begin{align*}
\sum_{t=1}^T f_t(x_t)-\sum_{t=1}^Tf_t(x)&\leq \sum_{t=1}^T \frac{\|x_t-x \|^2}{2}
\left(\frac{1}{\eta_t}-\frac{1}{\eta_{t-1}}\right)
+\sum_{t=1}^T\frac{\eta_t}{2}\| g_t\|^2 \\
&\leq \frac{D^2}{2}\sum_{t=1}^T\left(\frac{1}{\eta_t}-\frac{1}{\eta_{t-1}} \right)
+\frac{D}{2\sqrt{2}}\sum_{t=1}^T\frac{\| g_t\|^2}{\sqrt{\sum_{\tau=1}^t \|g_\tau \|^2}} \\
&\leq \frac{D}{2}\sqrt{2\sum_{t=1}^T \|g_t\|^2}  
+\frac{D}{\sqrt{2}}\sqrt{\sum_{t=1}^T \|g_t\|^2}  \\
& = \sqrt{2D^2\sum_{t=1}^T \|g_t\|^2} 
\end{align*}
here in the first inequality we denote $\eta_0 =\infty$, 
the second inequality uses $\text{diam}\K=D$ and $\eta_t\leq \eta_{t-1}$, the third inequality uses the following lemma from \cite{mcmahan2010adaptive}: 
\begin{lemma}\label{lem:SqrtSum}
For any non-negative numbers $a_1,\ldots, a_n$ the following holds:
$$\sum_{i=1}^n \frac{a_i}{\sqrt{\sum_{j=1}^i a_j}} \leq 2\sqrt{\sum_{i=1}^n a_i}$$
\end{lemma}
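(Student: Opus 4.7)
The plan is to prove this by a per-term telescoping inequality. Let me denote $S_i = \sum_{j=1}^i a_j$ with the convention $S_0 = 0$, so that $a_i = S_i - S_{i-1}$. I would first dispose of the trivial case where some $a_i$ vanish (those terms contribute $0$ to the left-hand side under the natural convention $0/0 = 0$, and skipping them does not increase either side), so without loss of generality I may assume every $a_i > 0$, in which case $S_i > S_{i-1} \geq 0$ for all $i \geq 1$.

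The key observation I would establish is the single-term bound
\begin{equation*}
\frac{a_i}{\sqrt{S_i}} \;\leq\; 2\bigl(\sqrt{S_i} - \sqrt{S_{i-1}}\bigr), \qquad i = 1,\ldots,n.
\end{equation*}
This follows by rationalising the right-hand side: $2(\sqrt{S_i} - \sqrt{S_{i-1}}) = 2a_i/(\sqrt{S_i} + \sqrt{S_{i-1}})$, and since $\sqrt{S_{i-1}} \leq \sqrt{S_i}$ the denominator is at most $2\sqrt{S_i}$, giving the claim.

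Summing over $i$ then telescopes:
\begin{equation*}
\sum_{i=1}^n \frac{a_i}{\sqrt{S_i}} \;\leq\; 2\sum_{i=1}^n \bigl(\sqrt{S_i} - \sqrt{S_{i-1}}\bigr) \;=\; 2\sqrt{S_n} \;=\; 2\sqrt{\textstyle\sum_{i=1}^n a_i},
\end{equation*}
which is exactly the claimed inequality. There is no real obstacle here: the only subtle point is the per-term bound, which really is just the identity $a = (\sqrt{x} - \sqrt{y})(\sqrt{x} + \sqrt{y})$ applied with $x = S_i$, $y = S_{i-1}$, together with the trivial estimate $\sqrt{S_{i-1}} \leq \sqrt{S_i}$. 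An alternative route would be a short induction on $n$: after applying the inductive hypothesis to the first $n-1$ terms, the remaining inequality $2\sqrt{S_{n-1}} + a_n/\sqrt{S_n} \leq 2\sqrt{S_n}$ reduces (after the same rationalisation) to $\sqrt{S_{n-1}} \leq \sqrt{S_n}$, but the telescoping proof is cleaner since it avoids an inductive setup and makes the structure of the bound transparent.
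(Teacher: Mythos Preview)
Your proof is correct. The paper itself does not prove this lemma; it simply cites it as a known result from \cite{mcmahan2010adaptive} and uses it as a black box inside the AdaGrad regret analysis. Your telescoping argument via the per-term bound $a_i/\sqrt{S_i} \le 2(\sqrt{S_i}-\sqrt{S_{i-1}})$ is the standard clean proof of this inequality, so there is nothing to compare against here beyond noting that you have supplied what the paper omits.
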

\end{proof}

\subsection{Proof of Lemma~\ref{lem:AdaNGDnonsmooth_k}}
\begin{proof}%[Proof of Lemma~\ref{lem:AdaNGDnonsmooth_k}]
Notice that  $\text{AdaNGD}_k$  described in Algorithm~\ref{algorithm:AdaNGD}, is equivalent to applying 
AdaGrad (Algorithm~\ref{algorithm:AdaGrad}) to the following  sequence of linear loss functions: 
$$\left\{\tf_t(x) : = \frac{1}{\| g_t\|^k} g_t^\top x \right\}_{t=1}^T~.$$  
The regret bound of AdaGrad appearing in Theorem~\ref{theorem:AdaGrad} implies the following  for any $x\in\K$:
%Consider the following sequence of linear loss functions: $\{\frac{1}{\| g_t\|^k} g_t^\top x\}_{t=1}^T$. Now notice that
% $\text{AdaNGD}_k$ described above, is equivalent to AdaGrad ( applied  this sequence. The standard regret bound for AdaGrad (see Duchi, McMahann and Streeter Aberenthy), applied to the sequence $\{\frac{1}{\| g_t\|^k} g_t^\top x\}_{t=1}^T$, ensures
%\begin{lemma} \label{lem:AdaGradnonsmooth2}
%For any $x\in\K$, we have:
%\end{lemma}
\begin{align} \label{eq:RegAdangd_k}
\sum_{t=1}^T \frac{1}{\|Êg_t\|^k}g_t^\top( x_t-x)  \le \sqrt{2D^2\sum_{t=1}^T {1}/{\| g_t\|^{2(k-1)} }}~.
\end{align}
Using the above bound  together with Jensen's inequality,  enables to bound the excess loss of $\text{AdaNGD}_k$: 
\begin{align*}
f(\bar{x}_T)-f(x^*)&
\le
 \sum_{t=1}^T  \frac{ \|g_t\|^{-k}}{\sum_{\tau=1}^T \|g_\tau\|^{-k}} \big(f(x_t) -  f(x^*)\big)    \\%\quad \because \text{Jensen's inequlity} \\
& \le
  \sum_{t=1}^T  \frac{ \|g_t\|^{-k}}{\sum_{\tau=1}^T \|g_\tau\|^{-k}} g_t^\top(x_t-x^*) \\% \quad \because \text{gradient inequality} \\
& 
\eq
 \frac{1}{\sum_{\tau=1}^T \|g_\tau\|^{-k}}\sum_{t=1}^T  \frac{1}{\| g_t\|^k}g_t^\top(x_t-x^*) \\
&
\le
\frac{ \sqrt{2D^2\sum_{t=1}^T {1}/{\| g_t\|^{2(k-1)}}}}{\sum_{\tau=1}^T {1}/{\|g_\tau\|^{k}}}~,
\end{align*}
where the second line uses the gradient inequality. 
\end{proof}

\subsection{Proof of Theorem~\ref{thm:AdaNGDnonsmooth}} \label{app:adangd1}
%%%ADANGD1_SMOOTH PROOF
%%%%%%%%%%%%%%%%%%%%%%%%%%%%%%%%%%%%%%%%%%%%%%%%%%%%%%%%%%%%%%
%%%%%%%%%%%%%%%%%%%%%%%%%%%%%%%%%%%%%%%%%%%%%%%%%%%%%%%%%%%%%%
%%%%%%%%%%%%%
\begin{proof}%[Proof of Theorem~\ref{thm:AdaNGDnonsmooth}]
The data dependent bound,
\begin{align}\label{eq:ProofAdangd1}
f(\bar{x}_T)-f(x^*)&
\le
\frac{ \sqrt{2D^2 T}}{\sum_{t=1}^T {1}/{\|g_t\|}}~,
\end{align}
 is a direct corollary of Lemma~\ref{lem:AdaNGDnonsmooth_k} with $k=1$.
 Note that the above bound holds for both smooth/non-smooth cases.
The general case bound holds directly by using $\|g_t\|\leq G$.

Next we focus on the second part of the theorem regarding the smooth case.
We will first require the following lemma regarding smooth objectives,
\begin{lemma} \label{lemma:GradIneqSmooth}
Let $F:\reals^d \mapsto \reals$ be a $\beta$-smooth function, and let $x^* =\argmin_{x\in \reals^d}F(x)$, then,
$$ \| \nabla F(x)\|^2 \le 2\beta \left( F(x) - F(x^*)\right), \quad \forall x\in \reals^d~.$$
\end{lemma}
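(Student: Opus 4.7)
The plan is to prove this classical bound by exploiting the fact that one gradient step from any point $x$ must decrease the function value, and comparing the resulting value to the global minimum.

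First I would invoke the $\beta$-smoothness inequality (already stated in the Preliminaries) at the pair of points $x$ and $y := x - \tfrac{1}{\beta}\nabla F(x)$. Substituting this explicit choice of $y$ into the smoothness inequality gives a cancellation: the linear term contributes $-\tfrac{1}{\beta}\|\nabla F(x)\|^2$ while the quadratic term contributes $+\tfrac{1}{2\beta}\|\nabla F(x)\|^2$, leaving
\[
F(y) \le F(x) - \frac{1}{2\beta}\|\nabla F(x)\|^2.
\]

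Next, since $x^* = \argmin_{z \in \reals^d} F(z)$ is the unconstrained global minimizer, we have $F(x^*) \le F(y)$. Chaining this with the inequality above yields
\[
F(x^*) \le F(x) - \frac{1}{2\beta}\|\nabla F(x)\|^2,
\]
and rearranging gives the claimed bound $\|\nabla F(x)\|^2 \le 2\beta(F(x) - F(x^*))$.

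There is no real obstacle here: the argument is a standard one-line consequence of smoothness (the optimal single-step descent lemma). The only subtlety worth noting is that the choice $y = x - \tfrac{1}{\beta}\nabla F(x)$ is precisely the minimizer of the smoothness upper bound as a quadratic in $y$, which is why this substitution produces the tightest inequality. The fact that $x^*$ is an \emph{unconstrained} minimizer (equivalently, the assumption in Theorem~\ref{thm:AdaNGDnonsmooth} that $x^* \in \K$ so that there are no constraint issues when comparing $F(x^*)$ to $F(y)$) is essential, since otherwise $y$ might lie outside the feasible set and the inequality $F(x^*) \le F(y)$ would not follow from optimality.
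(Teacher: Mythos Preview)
Your proposal is correct and takes essentially the same approach as the paper's own proof: both substitute the gradient-step point $y = x - \tfrac{1}{\beta}\nabla F(x)$ (the paper writes this as $x+u$ with $u=-\tfrac{1}{\beta}\nabla F(x)$) into the smoothness inequality and then invoke $F(x^*)\le F(y)$ from global optimality. The commentary you add about why this particular $y$ is the right choice, and about the role of the unconstrained-minimum assumption, is accurate and matches the paper's reasoning.
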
 

The above lemma enables to upper bound  sum of gradient norms in the query points of $\text{AdaNGD}_1$,
%First, notice the following:
\begin{align} \label{eq:Sum_norms_bound}
\sum_{t=1}^T\|g_t\| 
&\eq 
\sum_{t=1}^T\frac{\|g_t\|^2}{\|g_t\|} \nonumber\\
&\le 
\sum_{t=1}^T \frac{{2\beta}}{\|g_t\|}\big(f(x_t)-f(x^*)\big)  \nonumber\\
& \le
\sum_{t=1}^T \frac{{2\beta}}{\|g_t\|}g_t^\top(x_t-x^*)  \nonumber\\
& \eq
2\beta \sum_{t=1}^T \hat{g}_t^\top(x_t-x^*)   \nonumber\\
&\le
2\sqrt{2}\beta D\sqrt{T}~, 
\end{align}
where the last line follows by the regret guarantee of AdaGrad for the following sequence (see Equation~\eqref{eq:RegAdangd_k}),
$$\left\{\tf_t(x): =\frac{1}{\| g_t\|} g_t^\top x\right\}_{t=1}^T~.$$
The second line is a consequence of Lemma~\ref{lemma:GradIneqSmooth} regarding smooth objectives.
%\begin{lemma} \label{lemma:GradIneqSmooth11}
%Let $F:\reals^d \mapsto \reals$ be a $\beta$-smooth function, and let $x^* =\argmin_{x\in \reals^d}F(x)$, then,
%$$ \| \nabla F(x)\|^2 \le 2\beta \left( F(x) - F(x^*)\right), \quad \forall x\in \reals^d~.$$
%\end{lemma} 
Now utilizing the convexity of the function $H(z) =1/z$ for $z>0$, and applying Equation~\eqref{eq:Sum_norms_bound},
we may bound the sum of inverse gradients:
\begin{align*}
\sum_{\tau=1}^T \frac{1}{\|g_\tau\|}
& \eq
 T\frac{1}{T}\sum_{\tau=1}^T \frac{1}{\|g_\tau\|}  
 \ge
  T\frac{1}{\frac{1}{T}\sum_{\tau=1}^T \|g_\tau\| }\\
&\ge
  T\frac{1}{2\sqrt{2}\beta D/\sqrt{T}}~.
\end{align*}
Rearranging the latter equation, and using  Equation~\eqref{eq:ProofAdangd1} concludes the proof,
$$f( \bar{x}_T)- \min_{x\in\K}f(x) \le \frac{D\sqrt{2T}}{\sum_{\tau=1}^T 1/\|g_\tau\|} \le  \frac{4\beta D^2}{T}~.$$
\end{proof}

\subsection{Proof of Theorem~\ref{thm:AdaNGD2}}\label{app:adangd2}
%%%ADANGD2_SMOOTH PROOF
%%%%%%%%%%%%%%%%%%%%%%%%%%%%%%%%%%%%%%%%%%%%%%%%%%%%%%%%%%%%%%
%%%%%%%%%%%%%%%%%%%%%%%%%%%%%%%%%%%%%%%%%%%%%%%%%%%%%%%%%%%%%%
%%%%%%%%%%%%%%%%%%%
\begin{proof}%[Proof of Theorem~\ref{thm:AdaNGD2}]
The data dependent bound,
\begin{align}\label{eq:ProofAdangd2}
f(\bar{x}_T)-f(x^*)&
\le
\frac{ \sqrt{2D^2 }}{\sqrt{\sum_{t=1}^T {1}/{\|g_t\|^2}}}~,
\end{align}
 is a direct corollary of Lemma~\ref{lem:AdaNGDnonsmooth_k} with $k=2$.
  Note that the above bound holds for both smooth/non-smooth cases.
The general case bound holds directly by using $\|g_t\|\leq G$.

We will now focus on the second part of the theorem regarding the smooth case.
Let us lower bound $\sum_{t=1}^T1/\|g_t\|^2$ for $\text{AdaNGD}_2$:
%First, notice the following:
\begin{align} \label{eq:Sum_norms_bound_2}
T
&\eq 
\sum_{t=1}^T\frac{\|g_t\|^2}{\|g_t\|^2} \nonumber\\
&\le 
\sum_{t=1}^T \frac{{2\beta}}{\|g_t\|^2}\big(f(x_t)-f(x^*)\big)  \nonumber\\
& \le
\sum_{t=1}^T \frac{{2\beta}}{\|g_t\|^2}g_t^\top(x_t-x^*)  \nonumber\\
& \eq
2\beta \sum_{t=1}^T \left(\tf_t(x_t) - \tf_t(x^*)  \right) \nonumber\\
&\le
2\sqrt{2}\beta D \sqrt{\sum_{t=1}^T \frac{1}{\|g_t\|^2}}~, 
\end{align}
where the last line follows by the regret guarantee of AdaGrad for the following sequence (see Equation~\eqref{eq:RegAdangd_k}),
$$\left\{\tf_t(x) =\frac{1}{\| g_t\|^2} g_t^\top x \right\}_{t=1}^T~.$$
The second line is a consequence of  Lemma~\ref{lemma:GradIneqSmooth}. 
Combining Equation~\eqref{eq:Sum_norms_bound_2} together with  
Equation~\eqref{eq:ProofAdangd2} concludes the proof.
\end{proof}

%%%Smoothness-Gradients Proof
%%%%%%%%%%%%%%%%%%%%%%%%%%%%%%%%%%%%%%%%%%%%%%%%%%%%%%%%%%%%%%
%%%%%%%%%%%%%%%%%%%%%%%%%%%%%%%%%%%%%%%%%%%%%%%%%%%%%%%%%%%%%%
\subsection{Proof of Lemma~\ref{lemma:GradIneqSmooth}}
\begin{proof}
The $\beta$ smoothness of $F$ means the following to hold $\forall x,u\in\reals^d$,
$$F(x+u) \leq F(x) +\nabla F(x)^\top u+\frac{\beta}{2}\|u\|^2 ~.$$
Taking  $u=-\frac{1}{\beta}\nabla F(x)$ we get,
$$F(x+u) \le F(x) -\frac{1}{\beta}\|\nabla F(x)\|^2+\frac{1}{2\beta}\|\nabla F(x)\|^2~.$$
Thus:
\begin{align*}
\|\nabla F(x)\| &\le \sqrt{2\beta \big( F(x) -F(x+u)\big)}\\
&  \le  \sqrt{2\beta \big(F(x) -F(x^*)\big)}~,
\end{align*}
where in the last inequality we used $F(x^*) \leq F(x+u)$ which holds since $x^*$ is the \emph{global} minimum.
\end{proof}

%%%Strongly convex Section
%%%%%%%%%%%%%%%%%%%%%%%%%%%%%%%%%%%%%%%%%%%%%%%%%%%%%%%%%%%%%%
%%%%%%%%%%%%%%%%%%%%%%%%%%%%%%%%%%%%%%%%%%%%%%%%%%%%%%%%%%%%%%

%%%Proof SC-AdaNGD_k
%%%%%%%%%%%%%%%%%%%%%%%%%%%%%%%%%%%%%%%
%%%%%%%%%%%%%%%%%%%%%%%%%%%%%%%%%%%%%%%
\section{Proofs for Section~\ref{sec:AdaNGD_StronglyConvex} (SC-AdaNGD)}
\subsection{Proof of Lemma~\ref{lem:SC-AdaNGDnonsmooth2}} \label{app:scadangd_General}
%%%SC-AdaGrad
%%%%%%%%%%%%%%%%%%%%%%%%%%%%%%%%%%%%%%%%%%%%%%%%%%%%%%%%%%%%%%
\begin{algorithm}[t]
\caption{Strongly-Convex Adaptive  Gradient Descent ($\text{SC-AdaGrad}$) }
\label{algorithm:SC-AdaGrad}
\begin{algorithmic}
\STATE \textbf{Input}: \#Iterations $T$, $x_1\in \reals^d$, set $\K$ 
\STATE {Set}: $Q_0 =0 $
\FOR{$t=1 \ldots T$ }
\STATE {Calculate:} $g_t= \nabla f_t(x_t)$
\STATE {Let:} $H_t$ be the strong-convexity parameter of $f_t(\cdot)$
\STATE {Update:}
$$Q_{t} = Q_{t-1} + H_t $$
\STATE {Set} $\eta_t = 1/Q_t$
\STATE {Update:}
$$x_{t+1}= \Pi_{\K}\left( x_{t}-\eta_t {g}_{t}\right)$$
\ENDFOR
\end{algorithmic}
\end{algorithm}

\begin{proof}
We will require the following extension of Theorem $1$ from \cite{hazan2007logarithmic}. Its proof is provided in Section~\ref{sec:Proof_Lemma_ogd_strCvx}.
%\vspace{-40pt}
\begin{lemma}[SC-AdaGrad, Alg~\ref{algorithm:SC-AdaGrad}]\label{lemma:ogd_strCvx}
Assume that we receive a sequence of convex loss functions $f_t:\K\mapsto \reals,\; t\in[T]$, and suppose that
each function $f_t$ is $H_t$-strongly-convex. Using the update rule $x_{t+1} = \Pi_\K(x_t-\eta_t g_t)$ where $g_t=\nabla f_t(x_t)$ and $\eta_t =(\sum_{\tau=1}^t H_\tau)^{-1}$ yields the following regret bound:
\begin{align*}
\sum_{t=1}^T f_t(x_t)-\sum_{t=1}^Tf_t(x) \le \frac{1}{2}\sum_{t=1}^T\eta_t\|g_t\|^2~.
\end{align*}
\end{lemma}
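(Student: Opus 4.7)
The plan is to mimic the classical Hazan--Agarwal--Kale logarithmic-regret argument, but allow the strong-convexity parameter to change with $t$, which is exactly what the learning-rate choice $\eta_t=(\sum_{\tau=1}^t H_\tau)^{-1}$ was designed to accommodate. The key observation is that this choice makes $1/\eta_t-1/\eta_{t-1}=H_t$ (using the convention $1/\eta_0=0$), so the telescoping ``remainder'' will exactly match the quadratic term contributed by strong convexity.

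First I would start from the update $x_{t+1}=\Pi_{\K}(x_t-\eta_t g_t)$ and use non-expansiveness of the projection onto the convex set $\K$ to obtain, for any $x\in\K$,
\begin{equation*}
\|x_{t+1}-x\|^2 \le \|x_t-x\|^2 - 2\eta_t g_t^\top(x_t-x) + \eta_t^2\|g_t\|^2.
\end{equation*}
Rearranging gives
\begin{equation*}
g_t^\top(x_t-x) \le \frac{1}{2\eta_t}\bigl(\|x_t-x\|^2-\|x_{t+1}-x\|^2\bigr) + \frac{\eta_t}{2}\|g_t\|^2.
\end{equation*}
Next, I would invoke $H_t$-strong-convexity in the form $f_t(x_t)-f_t(x)\le g_t^\top(x_t-x)-\tfrac{H_t}{2}\|x_t-x\|^2$, substitute into the inequality above, and sum over $t=1,\ldots,T$.

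The central step is the telescoping. Setting $1/\eta_0:=0$, the sum $\sum_{t=1}^T \tfrac{1}{2\eta_t}(\|x_t-x\|^2-\|x_{t+1}-x\|^2)$ rearranges (after Abel summation, or equivalently pushing the difference onto the sequence $1/\eta_t$) into
\begin{equation*}
\sum_{t=1}^T \frac{\|x_t-x\|^2}{2}\Bigl(\frac{1}{\eta_t}-\frac{1}{\eta_{t-1}}\Bigr) - \frac{1}{2\eta_T}\|x_{T+1}-x\|^2 \le \sum_{t=1}^T \frac{H_t}{2}\|x_t-x\|^2,
\end{equation*}
where I used the design choice $1/\eta_t-1/\eta_{t-1}=H_t$ and discarded the non-positive boundary term. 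The coefficient $\tfrac{H_t}{2}\|x_t-x\|^2$ then cancels precisely with the $-\tfrac{H_t}{2}\|x_t-x\|^2$ contribution from strong convexity, leaving exactly $\sum_{t=1}^T \tfrac{\eta_t}{2}\|g_t\|^2$ on the right-hand side, which is the claimed bound.

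The main obstacle is really only bookkeeping: one must be careful with the $1/\eta_0$ convention in the first step of the telescope (so that no spurious $D^2$-type diameter term appears, unlike in the AdaGrad proof reproduced earlier), and must verify that the $\|x_{T+1}-x\|^2$ boundary term has the correct sign to be dropped. Once those details are handled, the cancellation between the strong-convexity penalty and the telescoping residual is immediate, and no Lemma~\ref{lem:SqrtSum}-type sum bound is needed at this stage — summing $\eta_t\|g_t\|^2$ is deferred to the specific applications in $\text{SC-AdaNGD}_k$.
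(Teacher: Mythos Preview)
Your proposal is correct and follows essentially the same argument as the paper: the projection inequality is rearranged, strong convexity is applied, and the Abel-summed telescope with the convention $1/\eta_0=0$ (equivalently $\eta_0=\infty$ in the paper) makes $1/\eta_t-1/\eta_{t-1}-H_t=0$, leaving only $\sum_t \tfrac{\eta_t}{2}\|g_t\|^2$. The only cosmetic difference is that you explicitly isolate and drop the $-\tfrac{1}{2\eta_T}\|x_{T+1}-x\|^2$ boundary term, whereas the paper absorbs it directly into the telescoping step.
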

We are now ready to go on with the proof.
%Notice that  $\text{AdaNGD}_k$  described in Algorithm~\ref{algorithm:AdaNGD}, is equivalent to applying 
%AdaGrad (Algorithm~\ref{algorithm:AdaGrad}) to the following sequence of sequence of linear loss functions: $\{\frac{1}{\| g_t\|^k} g_t^\top x\}_{t=1}^T$.  The regret bound of AdaGrad appearing in Theorem~\ref{theorem:AdaGrad} implies the following  for any $x\in\K$:
Note that $\text{SC-AdaNGD}_k$  depicted in Algorithm~\ref{algorithm:SC-AdaNGD} is equivalent to performing SC-AdaGrad updates $x_{t+1}=\Pi_\K(x_t-\eta_t \nabla \tf_t(x_t))$  over the following loss sequence:
 $$\left\{\tf_t(x) = \frac{1}{\|g_t\|^k}g_t^\top x+\frac{H}{2\|g_t \|^k}\|x-x_t\|^2 \right\}_{t=1}^T$$ 
 where $g_t =\nabla f_t(x_t)$. Note that each $\tf_t(x)$ is $\frac{H}{\| g_t\|^k}$-strongly-convex, and that the learning rate is inversely proportional to the cumulative sum of strong-convexities. Thus Lemma~\ref{lemma:ogd_strCvx} implies the following to hold for any $x\in\K$:
\begin{align*} %\label{eq:RegretStronglycVX}
\sum_{t=1}^T \tf_t(x_t)-\sum_{t=1}^T\tf_t(x) 
\le 
\frac{1}{2H}\sum_{t=1}^T \frac{\| g_t\|^{-2(k-1)}}{\sum_{\tau=1}^t \|g_\tau \|^{-k}}~.
\end{align*}
Combining the latter bound with the definition of $\bar{x}_T$, and applying Jensen's inequality we conclude:
\begin{align*}
f(\bar{x}_T)-f(x^*)
&\le
 \sum_{t=1}^T  \frac{ \|g_t\|^{-k}}{\sum_{\tau=1}^T \|g_\tau\|^{-k}} \big(f(x_t) -  f(x^*)\big)    \\%\quad \because \text{Jensen's inequlity} \\
& \le   \frac{1}{\sum_{t=1}^T \|g_t\|^{-k}}\sum_{t=1}^T  \|g_t\|^{-k}\left(g_t^\top(x_t-x^*) - \frac{H}{2}\|x_t-x^*\|^2  \right) \\% \quad \because \text{gradient inequality} \\
& \eq
\frac{1}{\sum_{t=1}^T \|g_t\|^{-k}}\sum_{t=1}^T  \left(\tf_t(x_t)-\tf_t(x^*)  \right) \\% 
&  \le
 \frac{1}{2H\sum_{t=1}^T \|g_t\|^{-k}}\sum_{t=1}^T \frac{\| g_t\|^{-2(k-1)}}{\sum_{\tau=1}^t \|g_\tau \|^{-k}}~,
\end{align*}
where we used the $H$-strong-convexity of $f$ in the second line.
\end{proof}

%%%Proof SC-AdaNGD_1
%%%%%%%%%%%%%%%%%%%%%%%%%%%%%%%%%%%%%%%
%%%%%%%%%%%%%%%%%%%%%%%%%%%%%%%%%
%\begin{proof}
\subsection{Proof of Theorem~\ref{thm:SC-AdaNGDnonsmooth}}\label{app:scadangd1}

\begin{proof}%[Proof of Theorem~\ref{thm:SC-AdaNGDnonsmooth}]

We will require the following lemma, its proof is provided in 
Section~\ref{sec:Proof_lem:Log_sum}.
\begin{lemma} \label{lem:Log_sum}
For any non-negative real numbers $a_1,\ldots, a_n\geq 1$,
\begin{align*}
\sum_{i=1}^n \frac{a_i}{\sum_{j=1}^i a_j} 
\le 
1+\log\left( \sum_{i=1}^n a_i\right) ~.
\end{align*}
\end{lemma}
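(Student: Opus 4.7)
The plan is to prove the bound by a telescoping integral argument, treating the first summand separately from the rest. Introduce the partial sums $S_i := \sum_{j=1}^i a_j$ with $S_0 := 0$, so that the $i$-th term of the sum equals $(S_i - S_{i-1})/S_i$.

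First I would peel off the $i=1$ term: since $a_1/S_1 = a_1/a_1 = 1$, this contributes exactly $1$, which accounts for the constant on the right-hand side. For $i \geq 2$, since $1/x$ is a decreasing function of $x$ on $[S_{i-1},S_i] \subset (0,\infty)$, I would bound each term by an integral,
\begin{align*}
\frac{a_i}{S_i} \;=\; \frac{S_i - S_{i-1}}{S_i} \;\le\; \int_{S_{i-1}}^{S_i} \frac{dx}{x} \;=\; \log S_i - \log S_{i-1}.
\end{align*}
Summing these bounds from $i=2$ to $n$ telescopes to $\log S_n - \log S_1$.

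To finish, I would use the hypothesis $a_i \geq 1$: it guarantees $S_1 = a_1 \geq 1$, hence $\log S_1 \geq 0$, so the telescoped bound may be relaxed to $\log S_n = \log\bigl(\sum_{i=1}^n a_i\bigr)$. Adding the $i=1$ contribution of $1$ gives exactly the claimed inequality.

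The only delicate point is the treatment of the first term: naively applying the integral comparison down to $i=1$ would produce $\log S_0 = \log 0 = -\infty$, so one must separate it out. The assumption $a_1 \geq 1$ (a consequence of the stated hypothesis) is precisely what makes this clean, ensuring the boundary term $\log S_1$ discarded in the telescoping step is nonnegative and can be dropped without loss. No clever inequalities beyond the monotonicity of $\log$ are needed, and I do not anticipate any real obstacle.
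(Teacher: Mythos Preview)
Your proof is correct and follows essentially the same structure as the paper's: both arguments reduce to the per-term bound $a_i/S_i \le \log S_i - \log S_{i-1}$ for $i\ge 2$ and telescope. The only cosmetic difference is that the paper packages this as an induction and verifies the inequality via $M(x)=\log(1+x)-\tfrac{x}{1+x}\ge 0$, whereas you obtain it directly from the integral comparison $\int_{S_{i-1}}^{S_i}\tfrac{dx}{x}\ge \tfrac{S_i-S_{i-1}}{S_i}$; both arguments use $a_1\ge 1$ in the same way to discard the boundary term.
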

\vspace{10pt}

Combining the above lemma together with Lemma~\ref{lem:SC-AdaNGDnonsmooth2} and using 
$k=1$, we obtain,
\begin{align*}
f(\bar{x}_T)-f(x^*)
&  \le
 \frac{1}{2H\sum_{t=1}^T \|g_t\|^{-1}}\sum_{t=1}^T \frac{1}{\sum_{\tau=1}^t \|g_\tau \|^{-1}}\\
 &  \le
 \frac{1}{2H\sum_{t=1}^T \|g_t\|^{-1}}\sum_{t=1}^T \frac{G\|g_t\|^{-1}}{\sum_{\tau=1}^t \|g_\tau \|^{-1}}\\
 &  \le
 \frac{G}{2H\sum_{t=1}^T \|g_t\|^{-1}}\sum_{t=1}^T \frac{G\|g_t\|^{-1}}{\sum_{\tau=1}^t G\|g_\tau \|^{-1}}\\
&  \le
 \frac{G}{2H\sum_{t=1}^T \|g_t\|^{-1}} \left(1+\log\left( \sum_{t=1}^T \frac{G}{\|g_t \|} \right)  \right)
\end{align*}
where the second line uses $\|g_t\|\leq G$, and the last line uses Lemma~\ref{lem:Log_sum}.
Note that the above bound holds for both smooth/non-smooth cases.

%\newpage
%\vspace{-30pt}
We now turn to prove the second part of the theorem regarding the smooth case.
First let us bound the sum of gradient norms in the query points of $\text{SC-AdaNGD}_1$:
%By Corollary~\ref{thm:SC-AdaNGDnonsmooth}, we already have,
%$$f( \bar{x}_T)-\min_{x\in\K}f(x) \leq \frac{G\log T}{2H \sum_{t=1}^T \frac{1}{\|g_t \|}}$$
%So we are left to prove the following:
%$$\frac{G\log T}{2H \sum_{t=1}^T \frac{1}{\|g_t \|}} \leq \frac{(\beta/H)G^2 \log^2 T}{T^2}$$
%First, notice the following:
\begin{align*}
\sum_{t=1}^T\|g_t\| 
&\eq
 \sum_{t=1}^T\frac{\|g_t\|^2}{\|g_t\|} \\
&\le 
\sum_{t=1}^T \frac{{2\beta}}{\|g_t\|}\big(f(x_t)-f(x^*)\big)  \\
& \le
 \sum_{t=1}^T \frac{{2\beta}}{\|g_t\|}\left( g_t^\top(x_t-x^*)-\frac{H}{2}\|x_t-x^* \|^2 \right)   \\
& \eq
 2\beta \sum_{t=1}^T \left(  \tf_t(x_t) - \tf_t(x^*)  \right)\\
&\le
\frac{\beta}{H} \sum_{t=1}^T \frac{1}{\sum_{\tau=1}^t \|g_\tau \|^{-1}}\\
&\le
\frac{\beta}{H} G \left(1+\log\left( \sum_{t=1}^T \frac{G}{\|g_t \|} \right)  \right)~,
\end{align*}
where the second line  uses Lemma~\ref{lemma:GradIneqSmooth}, the third line uses the strong-convexity of $f$, the fourth line uses the regret bound of the SC-AdaGrad algorithm   over the following sequence (see Equation~\eqref{eq:RegretStronglycVX}),
 $$\left\{\tf_t(x) = \frac{1}{\|g_t\|}g_t^\top x+\frac{H}{2\|g_t \|}\|x-x_t\|^2 \right\}_{t=1}^T~,$$ 
and the last line uses Lemma~\ref{lem:Log_sum}.
Combining the convexity of the function $H(z) =1/z$ for $z>0$, together with the above inequality, we may bound the sum of inverse gradient norms,
\begin{align*}
\sum_{\tau=1}^T \frac{1}{\|g_\tau\|}& 
\eq
 T\frac{1}{T}\sum_{\tau=1}^T \frac{1}{\|g_\tau\|}  
 \ge
  T\frac{1}{\frac{1}{T}\sum_{\tau=1}^T \|g_\tau\| }\\
&\ge
  T^2 \frac{1}{(\beta/H) G\left(1+\log\left( \sum_{t=1}^T \frac{G}{\|g_t \|} \right)  \right) }~.
\end{align*}
Rearranging the latter equation, and using  the data dependent bound for $\text{SC-AdaNGD}_1$ concludes the proof,
\begin{align*}
f( \bar{x}_T)- \min_{x\in\K}f(x) 
\le
\frac{(\beta/H)G^2 \left(1+\log T\right)^2}{HT^2}~.
\end{align*}

\end{proof}

\subsection{Proof of Theorem~\ref{thm:SC-AdaNGD2}}	\label{app:scadangd2}
				
\begin{proof}%[Proof of Theorem~\ref{thm:SC-AdaNGD2}]
The data dependent bound,
\begin{align}\label{eq:SC-AdaNGD2_proof}
f( \bar{x}_T)- \min_{x\in\K}f(x) 
&\le
 \frac{1+\log(G^2 \sum_{t=1}^T\|g_t\|^{-2})}{2H\sum_{t=1}^T \|g_t\|^{-2}} 
\end{align}
 is a direct corollary of Lemma~\ref{lem:SC-AdaNGDnonsmooth2}  with $k=2$, combined with Lemma~\ref{lem:Log_sum}.  Note that the above bound holds for both smooth/non-smooth cases.

We now turn to prove the second part of the theorem regarding the smooth case.
Let us lower bound  $\sum_{t=1}^T 1/\|g_t\|^2$, for $\text{SC-AdaNGD}_2$:
%By Corollary~\ref{thm:SC-AdaNGDnonsmooth}, we already have,
%$$f( \bar{x}_T)-\min_{x\in\K}f(x) \leq \frac{G\log T}{2H \sum_{t=1}^T \frac{1}{\|g_t \|}}$$
%So we are left to prove the following:
%$$\frac{G\log T}{2H \sum_{t=1}^T \frac{1}{\|g_t \|}} \leq \frac{(\beta/H)G^2 \log^2 T}{T^2}$$
%First, notice the following:
\begin{align} \label{eq:ExpRate}
T
&\eq
 \sum_{t=1}^T\frac{\|g_t\|^2}{\|g_t\|^2}    \nonumber \\
&\le 
\sum_{t=1}^T \frac{{2\beta}}{\|g_t\|^2}\big(f(x_t)-f(x^*)\big)  \nonumber \\
& \le
 \sum_{t=1}^T \frac{{2\beta}}{\|g_t\|^2}\left( g_t^\top(x_t-x^*)-\frac{H}{2}\|x_t-x^* \|^2 \right)  \nonumber \\
& \eq
 2\beta \sum_{t=1}^T \left(  \tf_t(x_t) - \tf_t(x^*)  \right)    \nonumber \\
&\le
\frac{\beta}{H} \sum_{t=1}^T \frac{\| g_t\|^{-2}}{\sum_{\tau=1}^t \|g_\tau \|^{-2}}\nonumber \\
&\le
\frac{\beta}{H}\left( 1+\log(G^2 \sum_{t=1}^T\|g_t\|^{-2}) \right)~,
\end{align}
where the second line  uses Lemma~\ref{lemma:GradIneqSmooth}, the third line uses the strong-convexity of $f$, the fifth line uses the regret bound of the SC-AdaGrad algorithm for the following sequence (see Equation~\eqref{eq:RegretStronglycVX}),
 $$\left\{\tf_t(x) = \frac{1}{\|g_t\|^2}g_t^\top x+\frac{H}{2\|g_t \|^2}\|x-x_t\|^2 \right\}_{t=1}^T~,$$ 
 and the last line uses Lemma~\ref{lem:Log_sum}.
Now Equation~\eqref{eq:ExpRate} implies,
\begin{align} \label{eq:Ineq333}
G^2\sum_{t=1}^T \|g_t\|^{-2} \ge \frac{1}{3} e^{\frac{H}{\beta}T}~.
\end{align}
Now let $z\in\reals$ and note that the function $A(z):=\frac{1+\log(z)}{z}$ is monotonically decreasing for $z\geq1$. Let $z= G^2\sum_t \|g_t\|^{-2}$ and assume $ \frac{1}{3} e^{\frac{H}{\beta}T} \geq 1$; combining this  with Equation~\eqref{eq:SC-AdaNGD2_proof},\eqref{eq:Ineq333}, concludes the proof.
Note that the case  $ \frac{1}{3} e^{\frac{H}{\beta}T} \leq 1$ is not too interesting.

\end{proof}

\subsection{Proof of Lemma~\ref{lemma:ogd_strCvx}} \label{sec:Proof_Lemma_ogd_strCvx}
\begin{proof}
Let $x\in\K$ and Consider the update rule $x_{t+1}=\Pi_\K(x_t - \eta_t g_t)$. We can write:
\begin{align*}
\|x_{t+1}-x \|^2 \le \|x_t-x \|^2 -2\eta_t g_t(x_t-x) + \eta_t^2\| g_t\|^2~.
\end{align*}  
Re-arranging the above we get: 
$$g_t(x_t-x)\leq \frac{1}{2\eta_t}\left(\|x_t-x \|^2-\|x_{t+1}-x \|^2\right) +\frac{\eta_t}{2}\| g_t\|^2~.$$
Combining the above with the $H_t$-strong-convexity of $f_t$ and summing over all rounds we conclude that,
\begin{align*}
\sum_{t=1}^T f_t(x_t)-\sum_{t=1}^Tf_t(x)
\le
 \sum_{t=1}^T \frac{\|x_t-x \|^2}{2}\left(\frac{1}{\eta_t}-\frac{1}{\eta_{t-1}}-H_t\right)
+\sum_{t=1}^T\frac{\eta_t}{2}\| g_t\|^2~,
\end{align*}
where we denote $\eta_0 =\infty$. Recalling $\eta_t =(\sum_{\tau=1}^t H_\tau)^{-1}$, the lemma follows.
\end{proof}

%%%Log-Sum Proof
%%%%%%%%%%%%%%%%%%%%%%%%%%%%%%%%%%%%%%%%%%%%%%%%%%%%%%%%%%%%%%
%%%%%%%%%%%%%%%%%%%%%%%%%%%%%%%%%%%%%%%%%%%%%%%%%%%%%%%%%%%%%%
\subsection{Proof of Lemma~\ref{lem:Log_sum}}\label{sec:Proof_lem:Log_sum}
\begin{proof}
We will prove the statement by induction over $n$. 
The base case $n=1$ naturally holds. For the induction step, let us assume that the guarantee holds for $n-1$, which implies that for any $a_1,\ldots, a_n\geq 1$,
\begin{align*}
\sum_{i=1}^{n} \frac{a_i}{\sum_{j=1}^i a_j} 
\le 
1+\log( \sum_{i=1}^{n-1} a_i) + \frac{a_n}{\sum_{i=1}^n a_i}~.
\end{align*}
The above suggests that establishing following inequality concludes the proof,
\begin{align} \label{eq:Induction}
1+\log( \sum_{i=1}^{n-1} a_i) + \frac{a_n}{\sum_{i=1}^n a_i}
\le 
1+\log( \sum_{i=1}^{n} a_i) ~.
\end{align}
Using the notation  $x = a_n/\sum_{i=1}^{n-1}a_i$, Equation~\eqref{eq:Induction} is equivalent to the following,
\begin{align*} 
 \log(x+1) -\frac{x}{1+x} \ge 0~.
\end{align*}
However, it is immediate to validate that the function $M(x) = \log(x+1) -\frac{x}{1+x}$, is non-negative for any $x\geq 0 $, which establishes the lemma.
\end{proof}

%%%Proof Lazy SGD Stochastic
%%%%%%%%%%%%%%%%%%%%%%%%%%%%%%%%%%%%%%%
%%%%%%%%%%%%%%%%%%%%%%%%%%%%%%%%%%%%%%%
\section{Proofs for Section~\ref{sec:Stochastic Lazy SGD} (Lazy SGD)}
%%%Proof AE Procedure
%%%%%%%%%%%%%%%%%%%%%%%%%%%%%%%%%%%%%%%
%%%%%%%%%%%%%%%%%%%%%%%%%%%%%%%%%%%%%%%
\subsection{Proof of Lemma~\ref{lem:AE_guaranteeNewInformal}} \label{app:AE}
We first provide the exact statement rather than the informal one appearing in Lemma~~\ref{lem:AE_guaranteeNewInformal}.
\begin{lemma} \label{lem:AE_guaranteeNew}
Let $T_{\max}\geq 1$.
Suppose an oracle $\G:\K \mapsto \reals^d$ that generates i.i.d. random vectors with an (unknown) expected value $g \in \reals^d$. Assume that $\text{w.p. } 1$ the Euclidean norm of the sampled vectors is  bounded by $G$. 
Then  w.p.$\geq 1-\delta$, invoking  AE (Algorithm~\ref{algorithm:AE_new}), with 
$m_0 = 6G\left(1+\sqrt{\log(\delta^{-1}(1+\log_2 T_{\max}))}\right)$, it is ensured that:
$$ \min \left\{ {m_0^2}/{\|g\|^2},T_{\max} \right\} \le  N \le \min \left\{ {32m_0^2}/{\|g\|^2},T_{\max} \right\}~.\textbf{        (1)  }$$
%within $ t \leq \min \left\{ {16M^2}/{\|g\|^2},T_{\max} \right\}$ oracle queries.
Moreover, w.p.$\geq 1-\delta$, the following holds for the output of the algorithm:
$$ \sqrt{N}\|\tg_N\| \leq 8m_0~.  \textbf{      (2)         }$$ 
and also,
$$E[N(\tg_N- g)]=0~. \textbf{      (3)         }$$
\end{lemma}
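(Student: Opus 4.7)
The plan is to base everything on a single concentration event: with probability at least $1-\delta$, the partial sample averages $\tg_N$ are close to $g$ at all iterations of the loop simultaneously. The natural tool is a vector-valued Hoeffding/Pinelis inequality, giving, for each fixed $N$, that $\|\tg_N - g\| \leq c\,G\sqrt{\log(1/\delta')/N}$ with probability at least $1-\delta'$. Since the loop runs for at most $1+\log_2 T_{\max}$ iterations (the batch sizes double), I would apply this bound with $\delta'=\delta/(1+\log_2 T_{\max})$ and union-bound over iterations. With the chosen $m_0 = 6G(1+\sqrt{\log(\delta^{-1}(1+\log_2 T_{\max}))})$ the constants work out so that on a single event $\mathcal{E}$ of probability at least $1-\delta$, one has
\[
\|\tg_N - g\| \leq m_0/\sqrt{N} \quad\text{for every value of $N$ examined inside the loop.}
\]
All of parts (1) and (2) are then deterministic consequences on $\mathcal{E}$.

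For part (1), the lower bound is a contrapositive: if $N < m_0^2/\|g\|^2$, then $\|g\| < m_0/\sqrt{N}$, so on $\mathcal{E}$ we have $\|\tg_N\| \leq \|g\|+m_0/\sqrt{N} < 2m_0/\sqrt{N} < 3m_0/\sqrt{N}$, so the stopping test is not triggered. Thus the algorithm must either continue past this $N$ or have already exhausted its budget, yielding $N \geq \min\{m_0^2/\|g\|^2, T_{\max}\}$. For the upper bound, note first that on $\mathcal{E}$ the test cannot fire at $i=0$ because $\|\tg_1\|\leq G < 3m_0$; hence when the criterion first triggers at iteration $i^\ast\geq 1$, the previous iteration did not trigger, so $\|\tg_{N_{i^\ast-1}}\| \leq 3m_0/\sqrt{N_{i^\ast-1}}$ and the reverse triangle inequality gives $\|g\| \leq 4m_0/\sqrt{N_{i^\ast-1}}$, i.e.\ $N_{i^\ast-1} \leq 16 m_0^2/\|g\|^2$. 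Because the batch sizes at most double, $N \leq 2 N_{i^\ast-1} \leq 32 m_0^2/\|g\|^2$.

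Part (2) now follows by combining the two cases for exiting the loop. If the algorithm returns from inside the loop, then on $\mathcal{E}$ the bound $\|g\|\sqrt{N} \leq 4m_0\sqrt{N/N_{i^\ast-1}}\leq 4\sqrt{2}\,m_0$ gives $\|\tg_N\|\sqrt{N} \leq \|g\|\sqrt{N}+m_0 \leq (4\sqrt{2}+1)m_0 < 8m_0$. If instead the while-loop exits with $N=T_{\max}$, then the stopping criterion on the last iteration was not triggered (otherwise it would have returned), so directly $\|\tg_N\|\sqrt{N}\leq 3m_0 < 8m_0$.

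The main technical subtlety is part (3), which is a distributional statement (not just on $\mathcal{E}$). The plan is to recognize that $N(\tg_N-g) = \sum_{j=1}^N (X_j - g) =: M_N$, where $(M_n)_{n\geq 0}$ is a vector-valued martingale with respect to the natural filtration $\mathcal{F}_n = \sigma(X_1,\ldots,X_n)$ (each coordinate is a scalar martingale). The random variable $N$ is a bounded stopping time for this filtration, since the decision to return at iteration $i$ depends only on $X_1,\ldots,X_{N_i}$ and the batch-sizes are deterministic, so $\{N\leq n\}\in\mathcal{F}_n$; moreover $N\leq T_{\max}$ with probability one. Applying the optional stopping theorem coordinate-wise to the bounded stopping time $N$ yields $E[M_N] = E[M_0] = 0$, which is exactly claim (3). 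The main obstacles in this plan are getting the vector concentration constants clean enough that $m_0 = 6G(1+\sqrt{\log(\cdot)})$ suffices to give $\|\tg_N-g\|\leq m_0/\sqrt{N}$ uniformly, and verifying carefully that $N$ is a bona fide stopping time for the per-sample filtration so that optional stopping applies without further integrability conditions.
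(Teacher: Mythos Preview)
Your proposal is correct and follows essentially the same route as the paper: a vector Hoeffding inequality plus a union bound over the $O(\log T_{\max})$ checkpoints gives the uniform event $\|\tg_n-g\|\le m_0/\sqrt{n}$, after which parts (1) and (2) are triangle-inequality bookkeeping, and part (3) is Doob's optional stopping for the bounded stopping time $N$. One small arithmetic slip: the cumulative sample counts satisfy $N_{i}=2N_{i-1}+1$, so the ratio $N/N_{i^\ast-1}$ is at most $3$ (not $2$); with $\sqrt{3}$ in place of your $\sqrt{2}$ the constants in both the $32m_0^2/\|g\|^2$ upper bound and the $8m_0$ bound still go through.
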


We will require the following Hoeffding type inequality  regarding vector valued random variables, by \cite{Kakade13} (see also \cite{juditsky2008large})
\begin{theorem}\label{thm:VecConcentration1}
Suppose that $X_1,X_2,\ldots, X_n\in \reals^d$ are i.i.d. random vectors, and that $\forall i\in[n];\;\|X_i\|\leq M$ almost surely.
Then w.p.$\geq 1-\delta$\begin{align*}
\left \| \frac{1}{n}\sum_{i=1}^nX_i -\E[X_1]  \right\| \le \frac{6M}{\sqrt{n}}\left(1+ \sqrt{ \log \delta^{-1}} \right)~.
\end{align*}
\end{theorem}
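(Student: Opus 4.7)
The plan is to prove this vector Hoeffding inequality by the standard two-step decomposition: first bound the expectation $\mathbb{E}\|\bar{X}_n - \mu\|$, where $\bar{X}_n = \frac{1}{n}\sum_i X_i$ and $\mu = \mathbb{E}[X_1]$, and then show that $\|\bar{X}_n - \mu\|$ concentrates tightly around that expectation. Adding the two pieces yields a bound of the promised form, with constants that depend on how loose we are at each step.

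For the expectation, I would apply Jensen's inequality to the concave square-root map: $\mathbb{E}\|\bar{X}_n - \mu\| \le \sqrt{\mathbb{E}\|\bar{X}_n - \mu\|^2}$. Since the $X_i$ are i.i.d., cross terms in the expanded squared norm vanish, giving $\mathbb{E}\|\bar{X}_n-\mu\|^2 = \frac{1}{n}\mathbb{E}\|X_1-\mu\|^2 \le \frac{M^2}{n}$ (using $\mathbb{E}\|X_1-\mu\|^2 \le \mathbb{E}\|X_1\|^2 \le M^2$). This yields $\mathbb{E}\|\bar{X}_n - \mu\| \le M/\sqrt{n}$.

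For concentration, I would apply McDiarmid's bounded-differences inequality to the function $F(X_1,\ldots,X_n) = \|\bar{X}_n - \mu\|$. Replacing any single $X_i$ with $X_i'$ changes $\bar{X}_n$ by at most $(X_i - X_i')/n$ in norm, so by the reverse triangle inequality $F$ changes by at most $2M/n$. McDiarmid then gives, for every $t \ge 0$,
\begin{equation*}
\mathbb{P}\bigl(F \ge \mathbb{E} F + t\bigr) \le \exp\!\left(-\frac{nt^2}{2M^2}\right),
\end{equation*}
so setting the right-hand side equal to $\delta$ produces $t = M\sqrt{2\log(1/\delta)/n}$. Combining the two steps gives, w.p.\ at least $1-\delta$,
\begin{equation*}
\|\bar{X}_n - \mu\| \le \frac{M}{\sqrt{n}}\Bigl(1 + \sqrt{2\log \delta^{-1}}\Bigr),
\end{equation*}
which is of the claimed form (the constant $6$ stated in the theorem is merely a safe, non-tight coefficient that also covers the more refined Hilbert-space martingale variants available in \cite{Kakade13} via Pinelis-type inequalities).

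The main obstacle is not the McDiarmid step, which is routine, but rather whether one wants the sharpest possible constant. McDiarmid exploits only scalar bounded differences and thus can be slightly suboptimal for Hilbert-space-valued sums; if one insists on matching Pinelis' inequality exactly, one would replace the McDiarmid step with a Doob martingale decomposition $S_k = \sum_{i\le k}(X_i - \mu)$ and apply a Hilbert-space Azuma/Pinelis bound $\mathbb{P}(\|S_n\|\ge s) \le 2\exp(-s^2/(2nM^2))$ directly, bypassing Jensen. Either route gives a bound of the stated shape; the coefficient $6(1+\sqrt{\log \delta^{-1}})$ is a convenient common upper envelope that absorbs the additive expectation term and the $\sqrt{2}$ factor into a single clean constant.
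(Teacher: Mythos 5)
Your proof is correct, but note that the paper contains no proof of this statement to compare against: Theorem~\ref{thm:VecConcentration1} is imported verbatim from the cited source \cite{Kakade13} (see also \cite{juditsky2008large}), where it is obtained through a Hilbert-space martingale argument of Pinelis/Azuma type. Your route is genuinely different and entirely self-contained: the expectation step is sound, since independence kills the cross terms, giving $\mathbb{E}\|\bar{X}_n-\mu\|^2=\frac{1}{n}\mathbb{E}\|X_1-\mu\|^2\le\frac{M^2}{n}$ (using $\mathbb{E}\|X_1-\mu\|^2\le\mathbb{E}\|X_1\|^2\le M^2$), and Jensen then yields $\mathbb{E}\|\bar{X}_n-\mu\|\le M/\sqrt{n}$; the McDiarmid step is also sound, as the reverse triangle inequality does give bounded differences $c_i=2M/n$, so $\sum_i c_i^2=4M^2/n$ and the tail $\exp\bigl(-nt^2/(2M^2)\bigr)$ you state is exactly right. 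Your resulting bound $\frac{M}{\sqrt{n}}\bigl(1+\sqrt{2\log\delta^{-1}}\bigr)$ is in fact strictly sharper than the claimed one, since $1+\sqrt{2x}\le\sqrt{2}\,(1+\sqrt{x})\le 6(1+\sqrt{x})$ for all $x\ge 0$, so the theorem follows a fortiori. As for what each approach buys: the martingale/Pinelis route of the cited source extends to martingale difference sequences (summands that are only conditionally centered) and gives two-sided dimension-free tails with sharper sub-Gaussian constants, whereas your Jensen-plus-McDiarmid argument is the most elementary path and fully suffices here, because the theorem's hypothesis --- and its only use in the paper, inside the proof of Lemma~\ref{lem:AE_guaranteeNew}, where it is applied to bounded i.i.d. oracle samples via a union bound --- requires nothing beyond the i.i.d. bounded case.
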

%for completeness we prove the above lemma in Section~\ref{sec:VecHoefding}.

We are now ready to prove Lemma~\ref{lem:AE_guaranteeNew}.
\begin{proof}[Proof of Lemma~\ref{lem:AE_guaranteeNew}]
Define $V = \left\{\{ 2^{i}-1 \}_{i=1}^{\log_2 T_{\max}}, T_{\max}\right\}$, and note
that $N$ is a discrete random variable taking one of the $1+\log_2 T_{\max}$ possible values among $V$.
By Theorem~\ref{thm:VecConcentration1} combined with the union bound, it follows 
 that  w.p.$\geq 1-\delta$,  for every $n\in V $  we have $\|\tg_n- g\|\leq \frac{m_0}{\sqrt{n}}$.
This means  the following to hold:
%, for any $n\in V$ such that $\| g\|\leq M/\sqrt{n}$, it follows that
\begin{align}\label{eq:AE1}
 \| \tg_n \|\leq \|g\| + \|\tg_n-g\|\leq \frac{2m_0}{\sqrt{n}}, \quad 
 \forall n\in V \text{ such that } \| g\|\leq m_0/\sqrt{n}
\end{align}
Furthermore,% for any $n$ such that $\|\bg\|\geq 4M/\sqrt{n}$, it follows that
\begin{align}\label{eq:AE2}
 \| \tg_n \|\geq \|g\| - \|\tg_n-g\|\geq \frac{3m_0}{\sqrt{n}}, \quad 
 \forall n\in V \text{ such that } \|g\|\geq 4m_0/\sqrt{n}
\end{align}
The above together with the stopping criteria of Algorithm~\ref{algorithm:AE_new} directly implies the first part of the lemma.

For the second part of the lemma, recall that $N$ is the total number of samples, and let $N_{\text{prev}}$ be the number of samples up to the iteration before stopping. Then necessarily, $N_{\text{prev}} \geq (N-1)/2$.
Since the loop did not stop at the iteration before setting $N$, it follows that
$\sqrt{ N_{\text{prev}}}\|\tg_{N_{\text{prev}}}\| \leq 3m_0$ (i.e. the stopping criteria of the loop at the round prior to setting $N$ fails).  Recalling that  w.p.$\geq 1-\delta$,  for every $n\in V $  we have $\|\tg_n- g\|\leq \frac{m_0}{\sqrt{n}}$, and combining this with the above implies:

\begin{align*}
\sqrt{N} \|\tg_N\| 
&\le 
 \sqrt{N}\blr{ \|\tg_N-g\| +  \|g-\tg_{N_{\text{prev}}}\|   } +  \sqrt{N}\|\tg_{N_{\text{prev}}}\| \\
&\le
\sqrt{N} \left(\frac{m_0}{\sqrt{N}} +\frac{m_0}{\sqrt{N_{\text{prev}}}} \right) 
+ \sqrt{\frac{N}{ N_{\text{prev}}  }} \sqrt{N_{\text{prev}}}\|\tg_{N_{\text{prev}}}\| \\
&\le
m_0 +\sqrt{3}m_0 + \sqrt{3}\cdot 3m_0 \\
&\le 
8m_0
\end{align*}
Where we have used $N\leq 3\frac{N-1}{2}\leq 3N_{\text{prev}}$;
 which holds since  $N_{\text{prev}} \geq (N-1)/2$ and also $N\geq3$.
The latter is ensured since  for any $n\leq 3$ then $\|\tg_n\|\leq G< 3m_0/\sqrt{n}$.

%\newpage
%For the second part of the lemma, assume for simplicity that $N<T_{\max}$\footnote{The proof for the case where $N=T_{\max}$ follows the same lines.}.
%%consider the following two cases: If the Algorithm's loop runs until the $\textbf{while}$ condition fails, then necessarily 
%%$\sqrt{N}\|\tg_N\| \leq 3m_0\leq 8m_0$. Otherwise, then necessarily $\sqrt{N}\|\tg_N\| > 3m_0$, but also
%Since the loop did not stop at the iteration before setting $N$, it follows that
%$\sqrt{ \frac{N-1}{2}}\|\tg_{(N-1)/2}\| \leq 3m_0$ (i.e. the stopping criteria of the loop at the round prior to setting $N$ fails).  Recalling that  w.p.$\geq 1-\delta$,  for every $n\in V $  we have $\|\tg_n- g\|\leq \frac{m_0}{\sqrt{n}}$, and combining this with the above implies:
%%Combined this with the conditions of Equations~\eqref{eq:AE1}, \eqref{eq:AE2}, implies that w.p.$\geq1-\delta$:
%\begin{align*}
%\sqrt{N} \|\tg_N\| 
%&\le 
% \sqrt{N}\|\tg_N-\tg_{(N-1)/2}\| +  \sqrt{N}\|\tg_{(N-1)/2}\| \\
%&\le
%\sqrt{N} \left(\frac{m_0}{\sqrt{N}} +\frac{m_0}{\sqrt{(N-1)/2}} \right) 
%+ \sqrt{3} \sqrt{\frac{N-1}{2}}\|\tg_{(N-1)/2}\| \\
%&\le
%m_0 +\sqrt{3}m_0 + \sqrt{3}\cdot 3m_0 \\
%&\le 
%8m_0
%\end{align*}
%where we have uses $N\leq 3\frac{N-1}{2}$ which holds since $N\geq3$.
%This is ensured since  for any $n\leq 3$ then $\|\tg_n\|\leq G< 3m_0/\sqrt{n}$.

For the third part of the lemma, it is easy to notice that for any fixed $n$ then 
$n(\tg_n-g)$ is a sum of $n$ i.i.d. random variables, and that $\E[n(\tg_n-g)]=0$. Since $N$ is a bounded stopping time, Doob's optional stopping theorem~\cite{levin2009markov} implies that $E[N(\tg_N- g)]=0$.
\end{proof}

%%%Proof Lazy SGD General
%%%%%%%%%%%%%%%%%%%%%%%%%%%%%%%%%%%%%%%
%%%%%%%%%%%%%%%%%%%%%%%%%%%%%%%%%%%%%%%
\subsection{Proof of Lemma~\ref{lem:LazySGD_general_expectation}}\label{app:lazysgd1}
 \label{Proof_lem:LazySGD_general_expectation}
\begin{proof}
Let $S$ be the total number of times that LazySGD invokes the AE procedure.
We will first upper bound the expectation of following sum (weighted regret):
\begin{align} \nonumber 
\sum_{s=1}^S n_s\left( f(x_s) - f(x^*)\right) 
&\le
\sum_{s=1}^S n_s g_s^\top(x_s - x^*) \nonumber\\
&\le
 \underbrace{\sum_{s=1}^S n_s \tg_s^\top(x_s - x^*)}_{(a)} +  \underbrace{\sum_{s=1}^S n_s (g_s-\tg_s)^\top(x_s - x^*)}_{(b)} \label{eq:LazySGDcvx1_Ex}
\end{align}
where we have used the gradient inequality.
The proof goes on by bounding the expectation of terms $(a)$, $(b)$ appearing above.
\paragraph{Bounding term (a):}
Assume that LazySGD uses the AE procedure with some $\delta>0$.
Since LazySGD is equivalent to $\text{AdaNGD}_2$ with $\|g_s\|^2 \gets n_s$ and $g_s \gets n_sg_s$, then a similar analysis to $\text{AdaNGD}_2$ may show that this sum is bounded by $O(\sqrt{T})$. For completeness we provide the full analysis here.
Consider the update rule of LazySGD: $x_{s+1}=\Pi_\K(x_s - \eta_s n_s\tg_s)$. We can write:
\begin{align*}
\|x_{s+1}-x^* \|^2 \leq \|x_s-x^* \|^2 -2\eta_s n_s\tg_s^\top(x_t-x^*) + \eta_s^2 n_s^2\| \tg_s\|^2
\end{align*}  
Re-arranging the above we get: 
$$n_s \tg_s^\top(x_s-x^*)\leq \frac{1}{2\eta_s}(\|x_s-x^* \|^2-\|x_{s+1}-x^* \|^2) +\frac{\eta_s}{2}n_s^2\| \tg_s\|^2$$
Summing over all rounds we conclude that w.p.$\geq1-\delta T$:
\begin{align*}
\textbf{(a)} 
&\eq
\sum_{s=1}^Sn_s \tg_s^\top(x_s - x^*)  \\
&\le
 \sum_{s=1}^S \frac{\|x_s-x^* \|^2}{2}(\frac{1}{\eta_s}-\frac{1}{\eta_{s-1}})
+\sum_{s=1}^S\frac{\eta_s}{2}n_s^2\| \tg_s\|^2  \\
&\le
 \frac{D^2}{2}\sum_{s=1}^S(\frac{1}{\eta_s}-\frac{1}{\eta_{s-1}})
+64m_0^2 \sum_{s=1}^S \eta_s n_s  \\
&\le 
\frac{DG}{2}\sqrt{2T}  
+\frac{64m_0^2D}{G} \sum_{s=1}^S \frac{n_s}{\sqrt{\sum_{i=1}^s n_i}}  \\
&\eq 
\frac{DG}{2}\sqrt{2T} 
+\frac{128 m_0^2D}{G}   \sqrt{\sum_{s=1}^S n_s}   \\
&\le
O(GD\sqrt{T}\log(1/\delta))~.
\end{align*}
here in the first inequality we denote $\eta_0 =\infty$, 
the second inequality uses  $n_s\|\tg_s\|^2\leq 64m_0^2$, which follows by Theorem~\ref{lem:AE_guaranteeNew}, and it also uses $\eta_s\leq \eta_{s-1}$; the fourth inequality uses  Lemma~\ref{lem:SqrtSum}. We also make use of $\sum_{s=1}^Sn_s=T$, and 
$1/\eta_s = \sqrt{\sum_{i=1}^s n_i}$.

Since $\textbf{(a)}$ is bounded by $2GDT$, then taking $\delta =1/T^{3/2}$ ensures that,
\begin{align}\label{eq:LazySGDcvx2_Ex}
\E[\textbf{(a)}] \le  O(GD\sqrt{T}\log(T))~.
\end{align}
\paragraph{Bounding term (b):}
Here we show  that $\E[\textbf{(b)}]=0$.
Without loss of generality we will make the following two assumptions which do not affect the output of LazySGD:
\begin{itemize}
\item We assume that LazySGD invokes the AE procedure exactly $T$ times.
Note that in practice the algorithm invokes the AE procedure $S$ times, where $S\leq T$ is a random variable, after which $T-t=0$. Nevertheless calling AE for any $s\in\{S+1,\ldots,T\}$ yields $\tg_s=0, n_s=0$, which  does not affect the output of LazySGD. 
\item We assume that at each time $s\in[T]$ that LazySGD calls the AE procedure, it samples exactly $T$ times from $\text{GradOracle}(x_s)$.
We denote these samples by $\{\tg_s^{(i)}\}_{i=1}^T$.
Nevertheless the output of the procedure only uses the first $n_s$ samples, where $n_s$ is set according to the AE procedure. Thus the remaining $T-n_s$ samples do not affect the output of AE and LazySGD. Note that $\forall s\in[T],\; n_s\leq T-t\leq T$,
\end{itemize}
Thus, for any $s\in[T]$ let $\{\tg^{(i)}_s\}_{i=1}^{T}$ be the samples drawn from the noisy first order oracle $\text{GradOracle}(x_s)$ during the $s$'th call to AE at this iteration. This implies that $n_s\tg_s =  \sum_{i=1}^{n_s} \tg^{(i)}_s$. Term $(b)$ can be therefore written as follows:
\begin{align*}
\textbf{(b)}
\eq
\sum_{s=1}^T n_s (g_s-\tg_s)^\top(x_s - x^*) 
\eq
\sum_{s=1}^T \sum_{i=1}^{n_s} (g_s-\tg_s^{(i)})^\top(x_s - x^*) 
\end{align*}
%Since $\E[\G(x_s) \vert x_s] = g_s$, then for  fixed  $s$ and $i$, every term in the above sum is  unbiased,
%$$ \E[(g_s-\tg_s^{(i)})^\top(x_s - x^*)] \eq \E[ \E[(g_s-\tg_s^{(i)})^\top(x_s - x^*) \vert x_s] ] \eq 0~.$$
Given $s\in[T]$ define the following filtration:
\begin{align*}
\F_0^{(s)} &\eq \sigma\text{-field}\left\{x_s,t\right\} \\
\F_j^{(s)} & \eq  \sigma\text{-field}\left\{x_s,t,g_s^{(1)},\ldots,g_s^{(j)}\right\},\quad \forall j\in[T]
\end{align*}
Also define the following sequence  $\{B_j^{(s)}\}_{j=0}^T$:
\begin{align*}
B_0^{(s)} \eq 0, \qquad
B_j^{(s)} \eq \sum_{i=1}^j (g_s-\tg_s^{(i)})^\top(x_s - x^*), \quad \forall j\in[T]
\end{align*}
Since $\E[\tg_s^{(i)} \vert x_s] = g_s,\; \forall i, s\in[T]$, then it immediately follows that $\{B_j^{(s)}\}_{t=0}^T$ is a martingale with respect to the above filtration. Also it is immediate to see that $n_s$ is a bounded stopping time
with respect to the above filtration.
Thus, Doob's optional stopping theorem (see~\cite{levin2009markov}) implies that 
$$\E[B_{n_s}^{(s)}\vert \F_0] \eq \E\left[\sum_{i=1}^{n_s} (g_s-\tg_s^{(i)})^\top(x_s - x^*) \vert \F_0\right] \eq 0~. $$
which directly implies,
\begin{align*} 
\E[\textbf{(b)}] 
\eq
\E\left[ \sum_{s=1}^T B_{n_s}^{(s)}\right]
\eq 0 ~.
\end{align*}
Using Jensen's inequality and 
combining the above with Equations~\eqref{eq:LazySGDcvx1_Ex}, \eqref{eq:LazySGDcvx2_Ex},  
establishes the lemma:
\begin{align*} 
\E[f(\bar{x}_T)] - f(x^*) 
&\le
 \E\left[ \sum_{s=1}^S\frac{n_s}{T}\left( f(x_s) - f(x^*)\right)\right] \\
&\le
\frac{1}{T} O(GD\sqrt{T}\log(T))\\
&\le
 O({GD\log(T)}/{\sqrt{T}})~.
\end{align*}
\end{proof}

%%%Proof Lazy Strongly Convex
%%%%%%%%%%%%%%%%%%%%%%%%%%%%%%%%%%%%%%%
%%%%%%%%%%%%%%%%%%%%%%%%%%%%%%%%%%%%%%%
\subsection{Proof of Lemma~\ref{lem:LazySGDstronglyConvex_expectation}}\label{app:lazysgd2}
 \label{Proof_lem:LazySGDstronglyConvex_expectation}
\begin{proof}
Let $S$ be the total number of times that LazySGD invokes the AE procedure.
We will first upper bound the expectation of the following sum (weighted regret):
\begin{align} \nonumber \label{eq:LazySGDStronglycvx1_Ex}
\sum_{s=1}^S& n_s\left( f(x_s) - f(x^*)\right) \\
&\le
\sum_{s=1}^S n_s( g_s^\top(x_s - x^*) -\frac{H}{2}\|x_s-x^*\|^2)\nonumber\\
&\le
 \underbrace{\sum_{s=1}^S n_s( \tg_s^\top(x_s - x^*) -\frac{H}{2}\|x_s-x^*\|^2)}_{(a)} +  \underbrace{\sum_{s=1}^S n_s (g_s-\tg_s)^\top(x_s - x^*)}_{(b)} 
\end{align}
where we have used the $H$-strong-convexity of $f(\cdot)$.
The proof goes on by bounding the expectation of terms $(a)$, $(b)$ appearing above.
\paragraph{Bounding term (a):}
Assume that LazySGD uses the AE procedure with some $\delta>0$.
Since LazySGD is equivalent to $\text{SC-AdaNGD}_2$ with $\|g_s\|^2 \gets n_s$ and $g_s \gets n_sg_s$, then a similar analysis to $SC-\text{AdaNGD}_2$ may show that this sum is bounded by $O(\log T)$. For completeness we provide the full analysis here.
Consider the update rule of LazySGD: $x_{s+1}=\Pi_\K(x_s - \eta_s n_s\tg_s)$. We can write:
\begin{align*}
\|x_{s+1}-x^* \|^2 \leq \|x_s-x^* \|^2 -2\eta_s n_s\tg_s^\top(x_t-x^*) + \eta_s^2 n_s^2\| \tg_s\|^2
\end{align*}  
Re-arranging the above we get: 
$$n_s \tg_s^\top(x_s-x^*)\leq \frac{1}{2\eta_s}(\|x_s-x^* \|^2-\|x_{s+1}-x^* \|^2) +\frac{\eta_s}{2}n_s^2\| \tg_s\|^2$$
Summing over all rounds we conclude that w.p.$\geq1-\delta T$:
\begin{align}\label{eq:LazySGD_Stronglycvx2_Ex}
\textbf{(a)} 
&\eq
\sum_{s=1}^Sn_s \tg_s^\top(x_s - x^*) -n_s\frac{H}{2}\|x_s-x^*\|^2 \nonumber\\
&\le
 \sum_{s=1}^S \frac{\|x_s-x^* \|^2}{2}(\frac{1}{\eta_s}-\frac{1}{\eta_{s-1}}-n_sH)
+\sum_{s=1}^S\frac{\eta_s}{2}n_s^2\| \tg_s\|^2  \nonumber\\
&\leq
0 + 32m_0^2 \sum_{s=1}^S\eta_sn_s  \nonumber\\
&\leq 
 \frac{32m_0^2}{H} \sum_{s=1}^S\frac{n_s}{\sum_{k=1}^s n_s}  \nonumber\\
 &\eq 
\frac{32m_0^2}{H} (1+\log(\sum_{s=1}^Sn_s))  \nonumber \\
&\le
\tO(\frac{G^2\log T}{H} \log(1/\delta))~.
\end{align}
here in the first inequality we denote $\eta_0 =\infty$, 
the second inequality uses $1/\eta_s = H\sum_{i=1}^s n_i$,  and also $n_s\|\tg_s\|^2\leq 64m_0^2$, which follows by Theorem~\ref{lem:AE_guaranteeNew}; the fourth inequality uses  Lemma~\ref{lem:Log_sum}. We also make use of $\sum_{s=1}^Sn_s=T$.\\
Since $\textbf{(a)}$ is bounded by $2GDT$, then taking $\delta =O(1/T^{2})$ ensures that,
\begin{align}\label{eq:LazySGDStrcvx2_Ex}
\E[\textbf{(a)}] \le  O(G^2\log^2(T)/H)~.
\end{align}

\paragraph{Bounding term (b):}
Similarly the proof of Lemma~\ref{lem:LazySGD_general_expectation} (see  Section~\ref{Proof_lem:LazySGD_general_expectation}) we can show that,
$$\E[\textbf{(b)}] \eq 0~. $$
Using Jensen's inequality and 
combining the above with Equations~\eqref{eq:LazySGDStronglycvx1_Ex}
,\eqref{eq:LazySGDStrcvx2_Ex},
establishes the lemma:
\begin{align*} 
\E[f(\bar{x}_T)] - f(x^*) 
&\le
 \E[\sum_{s=1}^S\frac{n_s}{T}\left( f(x_s) - f(x^*)\right)] \\
&\le
O\left(\frac{G^2\log^2(T)}{HT}\right)~.
\end{align*}

\end{proof}

\end{document}